\newtheorem{lemma}{Lemma}
\newtheorem{proposition}{Proposition}
\newcommand{\RR}{\mathbb{R}}
\newcommand{\NN}{\mathbb{N}}
\newcommand{\EE}{\mathbb{E}}
\newcommand{\Xcal}{\mathcal{X}}
\newcommand{\Dcal}{\mathcal{D}}
\newcommand{\Lcal}{\mathcal{L}}
\newcommand{\Ncal}{\mathcal{N}}
\newcommand{\Fcal}{\mathcal{F}}
\newcommand{\GP}{\textnormal{GP}}
\DeclareMathOperator*{\argmin}{argmin}
\renewcommand\cite{\citep}
\title[Uncertainty quantification using martingales for
misspecified Gaussian processes]{Uncertainty quantification using martingales\\
for misspecified Gaussian processes}
\begin{document}

\maketitle

\vspace{-2mm}
\begin{abstract}
We address uncertainty quantification for Gaussian processes (GPs) under misspecified priors, with an eye towards Bayesian Optimization (BO). GPs are widely used in BO because they easily enable exploration based on posterior uncertainty bands. However, this convenience comes at the cost of robustness: a typical function encountered in practice is unlikely to have been drawn from the data scientist's prior, in which case uncertainty estimates can be misleading, and the resulting exploration can be suboptimal.
We present a frequentist approach to GP/BO uncertainty quantification. We utilize the GP framework as a working model, but do not assume correctness of the prior. We instead construct a \emph{confidence sequence} (CS) for the unknown function using martingale techniques. There is a necessary cost to achieving robustness: if the prior was correct, posterior GP bands are narrower than our CS. Nevertheless, when the prior is wrong, our CS is statistically valid and empirically outperforms standard GP methods, in terms of both coverage and utility for BO. 
Additionally, we demonstrate that powered likelihoods provide robustness against model misspecification.
\end{abstract}


\vspace{-0.1in}
\section{Introduction}
\label{sec:introduction}
\vspace{-0.05in}

In Bayesian optimization (BO), a Bayesian model is leveraged to optimize an unknown
function $f^*$ \cite{mockus1978application, shahriari2015taking, snoek2012practical}. 
One is allowed to query the function at various points $x$ in the domain, and get noisy observations of $f^*(x)$ in return. Most BO methods use a Gaussian process (GP) prior, with a chosen kernel
function. However, in practice, it may be difficult to specify the prior accurately.
A few examples of where misspecification may arise include
\begin{itemize}[parsep=-1mm, topsep=2mm, leftmargin=8mm]
    \item an incorrect kernel choice (e.g. squared exponential versus Matern),
    \item bad estimates of kernel hyperparameters (e.g. lengthscale or signal variance), and
    \item heterogenous smoothness of $f^*$ over the domain $\mathcal{X}$.
\end{itemize}
Each of these can yield misleading uncertainty estimates,
which may then negatively affect the performance of BO \cite{schulz2016quantifying, sollich2002gaussian}. \textbf{This paper instead presents a frequentist approach to uncertainty quantification for GPs} (and hence for BO), which uses martingale techniques to construct a confidence sequence (CS) for $f^*$, irrespective of
misspecification of the prior.
A CS is a sequence of (data-dependent) sets that are uniformly valid over time, meaning that $\{C_t\}_{t\geq 1}$ such that $\Pr(\exists t \in \NN: f^* \notin C_t )\leq \alpha$. The price of such a robust guarantee is that if the prior was indeed accurate, then our confidence sets are looser than those derived from the posterior.

\textbf{Outline} The next page provides a visual illustration of our contributions. Section \ref{sec:background} provides the necessary background on GPs and BO, as well as on martingales and confidence sequences. Section \ref{sec:mainresult} derives our prior-robust confidence sequence, as well as several technical details needed to implement it in practice. Section \ref{sec:simulations} describes the simulation setup used in Figure~\ref{fig:viz} in detail. We end by discussing related work and future directions in Section~\ref{sec:discussion}, with additional figures in the appendix.


\begin{figure}[H]
\centering
\begin{subfigure}{0.45\textwidth}
  \includegraphics[width=\linewidth]{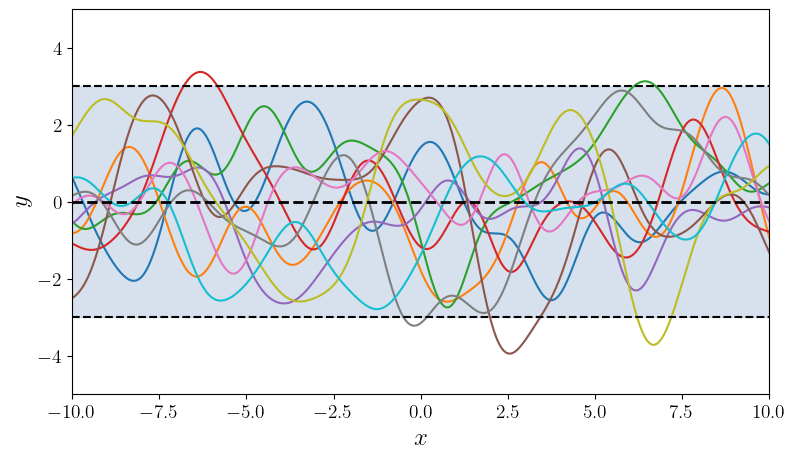}
\end{subfigure}\hfil
\begin{subfigure}{0.45\textwidth}
  \includegraphics[width=\linewidth]{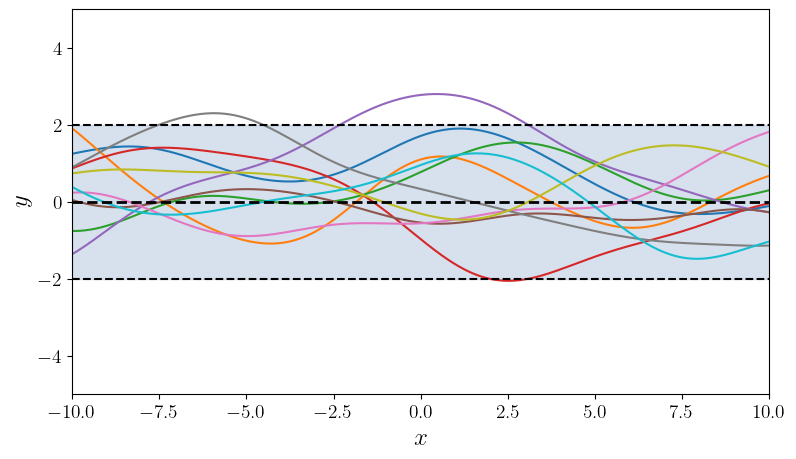}
\end{subfigure}
\\
\begin{subfigure}{0.45\textwidth}
  \includegraphics[width=\linewidth]{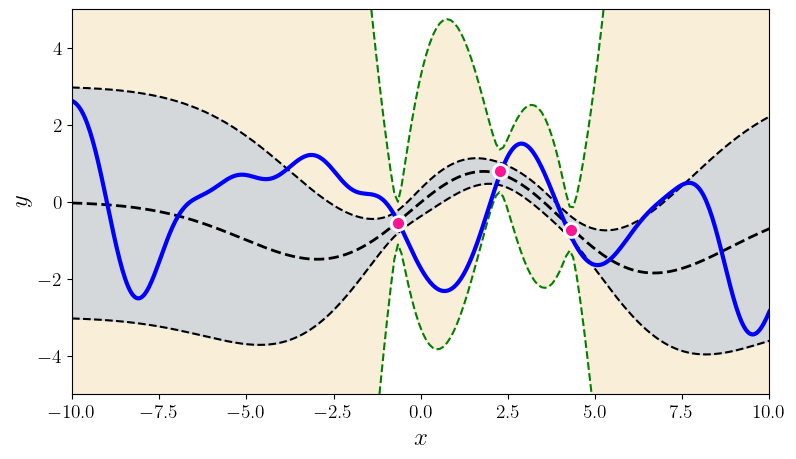}
\end{subfigure}\hfil
\begin{subfigure}{0.45\textwidth}
  \includegraphics[width=\linewidth]{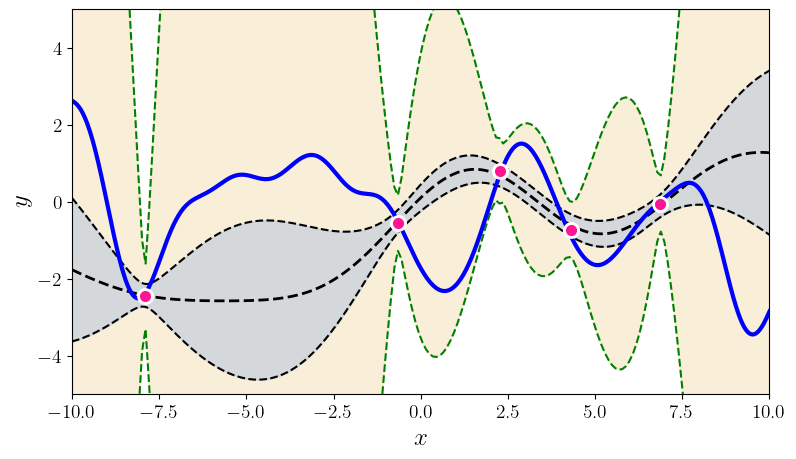}
\end{subfigure}
\\
\begin{subfigure}{0.45\textwidth}
  \includegraphics[width=\linewidth]{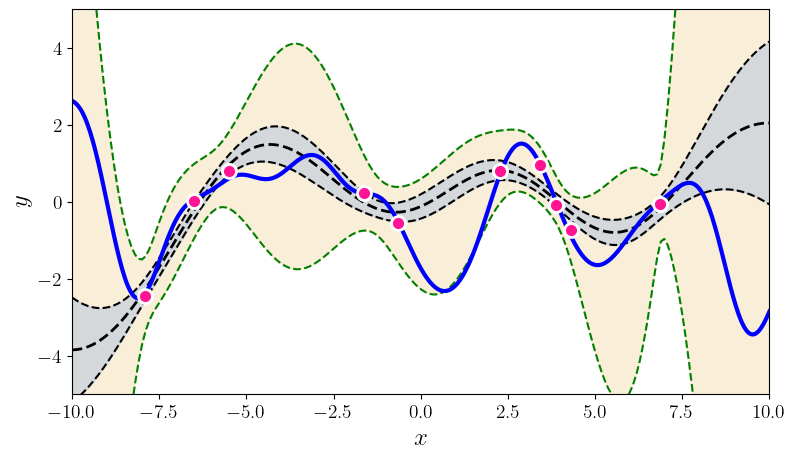}
\end{subfigure}\hfil
\begin{subfigure}{0.45\textwidth}
  \includegraphics[width=\linewidth]{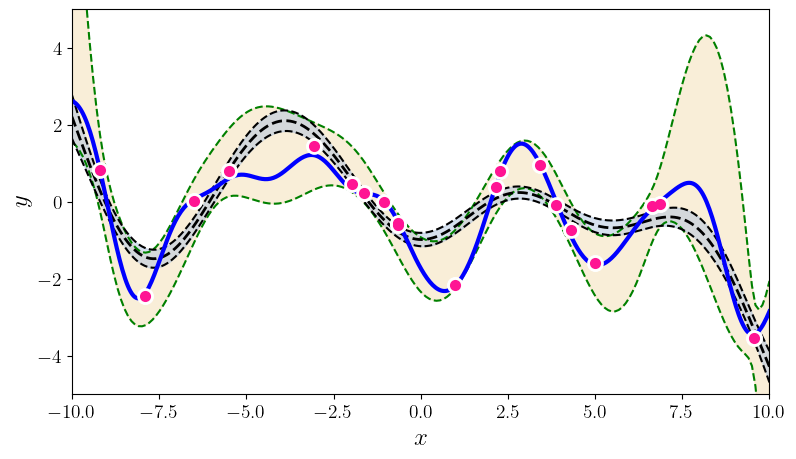}
\end{subfigure}
\\
\begin{subfigure}{0.45\textwidth}
  \includegraphics[width=\linewidth]{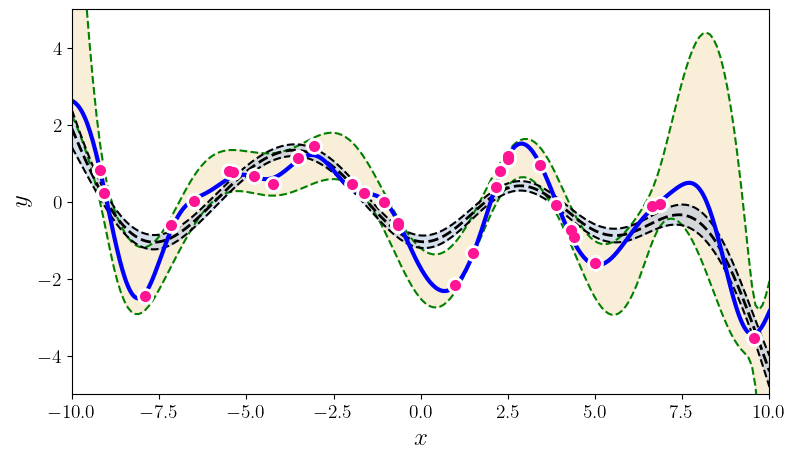}
\end{subfigure}\hfil
\begin{subfigure}{0.45\textwidth}
  \includegraphics[width=\linewidth]{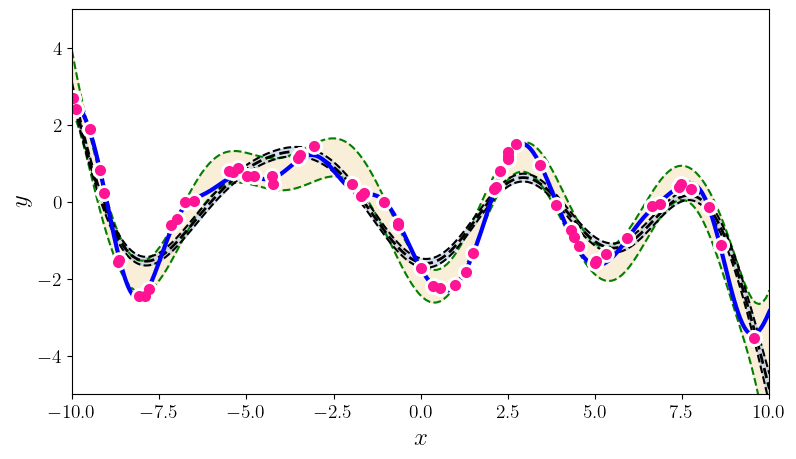}
\end{subfigure}
\vspace{-4mm}
\caption{ 
\small
This figure summarizes the paper's contributions.
The top two plots show various random functions drawn from a GP prior with hyperparameter settings A (left) and B (right). Then, a single function (blue curve) is drawn using prior A, and is fixed through the experiment. The pink dots are the observations; there are 3, 5, 10, 20, 30, 60 pink dots in the bottom 6 plots. The grey shaded region shows the standard GP posterior when (mistakenly) working with prior B. The brown shaded region shows our new confidence sequence, also constructed with the wrong prior B. The brown region is guaranteed to contain the true function with high probability uniformly over time. 
The grey confidence band after just 3 observations is already (over)confident, but quite inaccurate, and it never recovers. The brown confidence sequence is very wide early on (perhaps as it should be) but it recovers as more points are drawn. 
Thus, the statistical price of robustness to prior misspecification is wider bands. Whether this is an acceptable tradeoff to the practitioner is a matter of their judgment and confidence in the prior.
The observation that the posterior never converges to the truth (the data does not wash away the wrong prior) appears to be a general phenomenon of failure of the Bernstein-von-Mises theorem in infinite dimensions~\cite{freedman1999wald,cox1993analysis}.
The rest of this paper explains how this confidence sequence is constructed, using the theory of martingales. We provide more details (such as kernel hyperparameters A and B) for this simulation in Section \ref{sec:simulations}. Simulations for BO are available in the supplement. 
}
\label{fig:viz}

\end{figure}


\section{Mathematical background}
\label{sec:background}

\paragraph{Gaussian Processes (GP).} A GP is a stochastic process (a collection of random variables indexed by domain $\Xcal$) such that every finite collection of those random variables has a multivariate normal distribution. The distribution of a GP is a distribution over functions $g: \Xcal \mapsto \RR$, and thus GPs are often used as Bayesian priors over unknown functions. A GP is itself typically
specified by a mean function $\mu:\Xcal\rightarrow\RR$ and a
covariance kernel $\kappa:\Xcal^2\rightarrow\RR$. 
Suppose we draw a function 
\begin{equation}\label{eq:prior}
f \sim \text{GP}(\mu,\kappa)
\end{equation}
and obtain a set of $n$
observations $D_n = \{(X_i,Y_i)\}_{i=1}^n$, where $X_i \in \Xcal$,
\begin{equation}\label{eq:model}
Y_i = f(X_i) + \epsilon_i \in \RR, \text{ and } \epsilon_i \sim \Ncal(0, \eta^2).
\end{equation}
Then, the posterior process $f|D_n$ is also a GP with mean function
$\mu_n$ and covariance kernel
$\kappa_n$, described as follows. Collect the $Y_i$s into a vector $Y\in\RR^n$, and define
$k,k'\in\RR^n$ with $k_i =
\kappa(x,X_i), k'_i=\kappa(x',X_i)$,
and $K\in \RR^{n\times n}$ with
$K_{i,j} = \kappa(X_i,X_j)$. We can then write $\mu_n, \kappa_n$ as
\begin{align}
\hspace{-0.05in}
\mu_n(x) &= k^\top(K + \eta^2I)^{-1}Y, \hspace{0.35in}
\kappa_n(x,x') = \kappa(x,x') - k^\top(K + \eta^2I)^{-1}k'.
\hspace{0.05in}
\label{eqn:gppost}
\end{align}
Further background on GPs can be found in~\citet{williams2006gaussian}. In this paper, we describe a simple method for inference when \eqref{eq:prior} does not hold, but \eqref{eq:model} holds; in other words, the prior is arbitrarily misspecified but the model is correct. (If both are correct, GPs work fine, and if both are arbitrarily misspecified, statistical inference is essentially impossible.)

\paragraph{Bayesian Optimization (BO).}
Suppose we wish to minimize an unknown, fixed, nonrandom, function $f^*$ over
a domain $\Xcal$. Bayesian optimization (BO) leverages probabilistic models to 
perform optimization by assuming that $f^*$ was sampled from a GP. 

At time $t$ (we switch from $n$ to $t$ to emphasize temporality), assume we have already evaluated $f^*$ at points
$\{X_i\}_{i=1}^{t-1}$ and obtained observations $\{Y_i\}_{i=1}^{t-1}$.
To determine the next domain point $X_t$ to evaluate,
we first use the posterior GP to define an \emph{acquisition function}
$\varphi_t:\Xcal\rightarrow\RR$,
which specifies the utility of evaluating $f^*$ at any $x\in\Xcal$.
We then minimize the acquisition function to yield $X_t=\argmin_{x \in \Xcal} \varphi_t(x)$,
and evaluate $f^*$ at $X_t$. One of the most commonly used acquisition functions
is the GP lower confidence bound\footnote{Often described as the GP upper confidence bound (GP-UCB),
we use the GP lower confidence bound (GP-LCB) since we are performing minimization.}
(GP-LCB) by~\citet{srinivas2009gaussian}, written
\begin{align}
\varphi_t(x) = \mu_t(x) - \beta_t^{1/2} \sigma_t(x)
\label{eqn:ucbacq}
\end{align}
where $\mu_t$ and $\sigma_t$ are the posterior GP mean and
standard deviation, and $\beta_t>0$ is a tuning parameter that 
determines the tradeoff between exploration and exploitation.


Due to inheriting their worldview from GPs, theoretical guarantees in the BO literature typically assume correctness of both \eqref{eq:prior} and \eqref{eq:model}. These may or may not be reasonable assumptions. In this paper, \eqref{eq:prior} is used as a working model that is not assumed correct, but \eqref{eq:model} is still assumed. We do not provide guarantees on any particular BO algorithm persay, but instead provide correct uncertainty quantification that could be exploited by any BO algorithm, including but not necessarily GP-LCB.

\paragraph{Filtrations and stopping times.} To make the sequential aspect of BO explicit, let 
\[
\Dcal_t = \sigma((X_1,Y_1),\dots,(X_t,Y_t)) \equiv \sigma(D_t)
\] 
denote the sigma-field of the first $t$ observations, which captures the information known at time $t$; $\Dcal_0$ is the trivial sigma-field. Since $\Dcal_t \supset \Dcal_{t-1}$,  $\{\Dcal_t\}_{t \geq 0}$ forms a filtration (an increasing sequence of sigma-fields). Using this language, the acquisition function $\varphi_t$ is then \emph{predictable}, written $\varphi_t \in \Dcal_{t-1}$, meaning that it is measurable with respect to $\Dcal_{t-1}$ and is hence determined with only the data available after $t-1$ steps. As a result, $X_t$ is technically also predictable. However, $Y_t$ is not predictable (it is \emph{adapted}), since $Y_t \in \Dcal_t$ but $Y_t \notin \Dcal_{t-1}$.  A stopping time $\tau$ is  an $\NN$-valued random variable such that $\{\tau \geq t \} \in \Dcal_{t-1}$ for all $t$, or equivalently if 
\[\{\tau \leq t\} \in \Dcal_t,\] 
meaning that we can tell if we have stopped by time $t$, using only the information available up to $t$. 

\paragraph{Martingales.} An integrable process $\{M_t\}_{t \geq 0}$ is said to be a martingale with respect to filtration $\{\Dcal_t\}_{t \geq 0}$, if $M_t \in \Dcal_t$ and for every $t \geq 1$, we have
\[
\EE[M_t | \Dcal_{t-1}] = M_{t-1}.
\] 
If we replaced the $=$ above by an inequality $\leq$, the resulting process is called a supermartingale. Every martingale is a supermartingale but not vice versa. Ville's inequality~\cite[Pg. 100]{ville_etude_1939} states that if $\{M_t\}$ is a nonnegative supermartingale, then for any $x > 0$, we have
\begin{equation}\label{eq:Ville}
\small
\Pr (\exists t \in\NN: M_t \geq x) \leq \frac{\EE[M_0]}{x}.
\end{equation}
See \citet[Lemma 1]{howard_exponential_2018} for a measure-theoretic proof and \citet[Proposition 14.8]{shafer2019game} for a game-theoretic variant.
In many statistical applications,  $M_0$ is chosen to deterministically equal one.
Ville's may be viewed as a time-uniform version of Markov's inequality for nonnegative random variables. In the next section, we will construct a martingale (hence supermartingale) for GP/BO, and construct a confidence sequence (defined next) for the underlying function $f$ by applying Ville's inequality.

\paragraph{Confidence sequences (CS).} 
Suppose we wish to sequentially estimate an unknown quantity $\theta^*$ (a scalar, vector, function, etc.)  as we observe an increasing number of datapoints, summarized as a filtration $\Dcal_t$.
A CS is defined as a sequence of confidence sets $\{C_t\}_{t \geq 1}$ that contains $\theta^*$ at all times with high probability. 
Formally, for a confidence level $\alpha \in (0,1)$, we need that $C_t \in \Dcal_t$ and
\begin{equation}\label{eq:CS}
\underbrace{\Pr(\forall t \in \mathbb{N}: \theta^* \in C_t)}_{\text{coverage at all times}} \geq 1-\alpha \quad \equiv \quad 
\underbrace{\Pr(\exists t \in \mathbb{N}: \theta^* \notin C_t)}_{\text{error at some time}} \leq \alpha.
\end{equation}
Here, $C_t$ obviously depends on $\alpha$, but it is suppressed for simplicity. Importantly, property \eqref{eq:CS} holds if and only if $\Pr(\theta^* \in C_\tau) \geq 1-\alpha$ for all possible (potentially infinite) stopping times $\tau$. This allows us to provide correct uncertainty quantification that holds even at data-dependent stopping times. Next, we describe  our construction of a confidence sequence for $f^*$.

\section{Deriving our prior-robust confidence sequence}
\label{sec:mainresult}

One of the roles of the prior in BO is to restrict the complexity of the function $f^*$.  Since we do not assume the prior is well-specified, we need some other way to control the complexity of $f^*$---without any restriction on $f^*$, we cannot infer its value at any point outside of the observed points since it could be arbitrarily different even at nearby points. We do this by assuming that $f^* \in \Fcal$ for some $\Fcal$, that is either explicitly specified---say via a bound on the Reproducing Kernel Hilbert Space (RKHS) norm, or by a bound on the Lipschitz constant---or implicitly specified (via some kind of regularization). 

The choice of $\Fcal$ has both statistical and computational implications; the former relates to the size of the class, the sample complexity of identifying the optimum via BO, and the rate at which the confidence bands will shrink, while the latter relates to how much time it takes to calculate and/or update the confidence bands. Ultimately, $\Fcal$ must be specified by the practitioner based on their knowledge of the underlying problem domain. For this section, we treat any arbitrary $\Fcal$, and in the next section we discuss one particular choice of $\Fcal$ for which the computational load is reasonable.

It is worth noting that we have not simply shifted the problem from specifying a prior to specifying $\Fcal$---the latter does not impose a probability structure amongst its elements, while the former does. There are other differences as well; for example comparing a GP prior with a particular kernel, to a bounded RKHS ball for the same kernel, we find that the former is much richer than the latter---as mentioned after Theorem~3 of~\citet{srinivas2009gaussian}, random functions drawn from a GP have infinite RKHS norm almost surely, making the sample paths much rougher/coarser than functions with bounded RKHS norm.

\subsection{Constructing the prior-posterior-ratio martingale}
\label{sec:constructing}

We first begin with some technicalities. 
Recall that a GP is interpreted as a prior distribution over functions $g: \Xcal \mapsto \RR$.
For simplicity, and to avoid measure-theoretic issues, consider the case of $\Xcal = \RR^d$ by default, equipped with the Borel sigma algebra. It is clear to us that the following results do hold more generally, albeit at the price of further mathematical machinery, since extra care is needed when dealing with infinite-dimensional measures.
Let 
$\GP_0(f)$ represent the prior ``density'' at function $f$, and let $\GP_t(f)$ represent the posterior ``density'' at $f$ after observing $t$ datapoints. ``Density'' is in quotes because in infinite dimensional spaces, there is no analog of the Lebesgue measure, and thus it is a~priori unclear which measure these are densities with respect to. Proceeding for now, we soon sort this issue out.

Define the \emph{prior-posterior-ratio} for any function $f$ as the following real-valued process:
\begin{equation}\label{eq:PPR}
R_t(f) := \frac{\GP_0(f)}{\GP_t(f)}.
\end{equation}
Note that $R_0(f)=1$ for all $f$.
Denote the working likelihood of $f$ by
\begin{equation}\label{eq:likelihood}
\Lcal_{t}(f) ~:=~ \prod_{i=1}^t \frac1{\eta\sqrt{2\pi}} e^{-\frac12\left(\frac{Y_i - f(X_i)}{\eta}\right)^2} ~\equiv~ \prod_{i=1}^t \frac1{\eta} \phi\left(\frac{Y_i - f(X_i)}{\eta}\right),
\end{equation}
where $\phi(y)$ denotes the standard Gaussian PDF, so that $\phi((y - \mu)/\sigma)/\sigma$ is the PDF of $\Ncal(\mu,\sigma^2)$.
Then, for any function $f$, the working posterior GP is given by 
\begin{equation}\label{eq:posterior}
\GP_{t}(f) := \frac{\GP_0(f) \Lcal_t(f)}{\int_g \GP_0(g) \Lcal_{t}(g)}.
\end{equation}
Substituting the posterior \eqref{eq:posterior} and likelihood \eqref{eq:likelihood} into the definition of the prior-posterior-ratio \eqref{eq:PPR}, the latter can be more explicitly written as \begin{equation}\label{eq:PPR-mixtureLRT}
R_t(f) ~=~ \int_g \GP_0(g) \frac{\Lcal_{t}(g)}{\Lcal_{t}(f)} ~\equiv~ \EE_{g \sim \GP_0}\left[\frac{\Lcal_{t}(g)}{\Lcal_{t}(f)} \right],
\end{equation}
and it is this last form that we use, since it avoids measure-theoretic issues. 
Indeed, $\Lcal_t(f), R_t(f)$ are well-defined and finite for every $f$, as long as $f$ itself is finite, and one is anyway uninterested in considering functions that can be infinite on the domain.

As mentioned at the start of this section, fix a function $f^* \in \Fcal$. Assume that the data are observed according to \eqref{eq:model} when the $X_i$s are predictably chosen according to any acquisition function. Despite not assuming \eqref{eq:prior}, we will still use a GP framework to model and work with this data, and we call this our ``working prior'' to differentiate it from an assumed prior.

\begin{lemma}\label{lem:PPR-martingale}
Fix any arbitrary $f^* \in \Fcal$, and assume data-generating model \eqref{eq:model}. Choose any acquisition function $\varphi_t$, any working prior $\GP_0$ and construct the working posterior $\GP_{t}$. Then, the prior-posterior-ratio at $f^*$, denoted $\{R_t(f^*)\}_{t \geq 0}$, is a martingale with respect to filtration $\{\Dcal_t\}_{t \geq 0}$. 
\end{lemma}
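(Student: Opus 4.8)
The plan is to verify the defining properties of a martingale for $\{R_t(f^*)\}_{t \ge 0}$ directly from the mixture-of-likelihood-ratios form \eqref{eq:PPR-mixtureLRT}. Adaptedness ($R_t(f^*) \in \Dcal_t$) is immediate, since $R_t(f^*)$ is a deterministic function of $(X_1,Y_1),\dots,(X_t,Y_t)$ through the likelihoods $\Lcal_t$. Integrability, and the interchange of expectations used below, follow from nonnegativity of the integrand $\Lcal_t(g)/\Lcal_t(f^*)$. So the real content is the one-step identity $\EE[R_t(f^*)\mid \Dcal_{t-1}] = R_{t-1}(f^*)$.

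I would first write the telescoping factorization of the inner likelihood ratio,
\[
\frac{\Lcal_t(g)}{\Lcal_t(f^*)} = \frac{\Lcal_{t-1}(g)}{\Lcal_{t-1}(f^*)} \cdot \frac{\phi\!\left((Y_t - g(X_t))/\eta\right)}{\phi\!\left((Y_t - f^*(X_t))/\eta\right)},
\]
and then, using Tonelli to pull the conditional expectation $\EE[\,\cdot\mid\Dcal_{t-1}]$ inside the prior expectation $\EE_{g \sim \GP_0}$, reduce the problem to evaluating, for each fixed $g$, the conditional expectation of the single new factor. Since the first factor $\Lcal_{t-1}(g)/\Lcal_{t-1}(f^*)$ is $\Dcal_{t-1}$-measurable, it pulls out of the conditional expectation, leaving exactly the one-step ratio to control.

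The crux is the computation of $\EE\!\left[\phi((Y_t - g(X_t))/\eta)/\phi((Y_t - f^*(X_t))/\eta)\mid \Dcal_{t-1}\right]$. Here two structural facts combine: $X_t$ is predictable, hence fixed given $\Dcal_{t-1}$, and the model \eqref{eq:model} is assumed correct, so conditionally on $\Dcal_{t-1}$ the true law of $Y_t$ is $\Ncal(f^*(X_t), \eta^2)$ with density $\frac{1}{\eta}\phi((y - f^*(X_t))/\eta)$. The denominator of the ratio is therefore exactly this true density, and integrating the ratio against it collapses the denominator, leaving $\int \tfrac{1}{\eta}\phi((y - g(X_t))/\eta)\,dy = 1$, the normalization of the $\Ncal(g(X_t),\eta^2)$ density. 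Thus each one-step ratio has conditional expectation $1$, regardless of $g$.

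Combining these, $\EE[\Lcal_t(g)/\Lcal_t(f^*)\mid \Dcal_{t-1}] = \Lcal_{t-1}(g)/\Lcal_{t-1}(f^*)$ for every $g$, and taking the prior mixture gives $\EE[R_t(f^*)\mid \Dcal_{t-1}] = R_{t-1}(f^*)$, as claimed. The essential hypothesis driving the argument is the correctness of the observation model \eqref{eq:model}: it is precisely what makes $f^*$, and no other function, produce the cancellation, which is why the statement fixes $f^*$ to be the true data-generating function. The main technical obstacle is merely justifying the Tonelli interchange over the infinite-dimensional prior $\GP_0$, but nonnegativity of the integrand makes this routine, and the mixture-of-ratios form \eqref{eq:PPR-mixtureLRT} was chosen exactly to sidestep the absence of a Lebesgue reference measure.
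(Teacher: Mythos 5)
Your proof is correct and follows essentially the same route as the paper's: both expand $R_t(f^*)$ via the mixture form \eqref{eq:PPR-mixtureLRT}, interchange the conditional expectation with the prior expectation (you invoke Tonelli via nonnegativity, the paper cites Fubini), factor out the $\Dcal_{t-1}$-measurable ratio $\Lcal_{t-1}(g)/\Lcal_{t-1}(f^*)$ using predictability of $X_t$, and reduce to the fact that a likelihood ratio integrates to one against the true conditional density $\tfrac{1}{\eta}\phi\bigl((y-f^*(X_t))/\eta\bigr)$. Your added remarks on adaptedness and integrability are fine refinements of points the paper leaves implicit, but the substance is identical.
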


\begin{proof}
Evaluating $R_t$ at $f^*$, taking conditional expectations and applying Fubini's theorem, yields
\begin{align*}
\EE_{D_t \sim f^*}[R_t(f^*) \mid \Dcal_{t-1}] 
&=~ \EE_{D_t \sim f^*}\left[\EE_{g \sim \GP_0} \left[\frac{\Lcal_{t}(g)}{\Lcal_{t}(f^*)} \right]\mid \Dcal_{t-1}\right]\\ 
&=~  \EE_{g \sim \GP_0} \left[ \EE_{D_t \sim f^*}\left[ \frac{\Lcal_{t}(g)}{\Lcal_{t}(f^*)}\mid \Dcal_{t-1}\right]\right]\\
&\stackrel{(i)}{=}~ \EE_{g \sim \GP_0} \left[ \frac{\Lcal_{t-1}(g)}{\Lcal_{t-1}(f^*)} ~\cdot~ 
\underbrace{\EE_{D_t \sim f^*}\left[ \frac{\phi(\frac{Y_t - g(X_t)}{\eta})}{\phi(\frac{Y_t - f^*(X_t)}{\eta})}\mid \Dcal_{t-1}\right]}_{=1}
\right] ~=~ R_{t-1}(f^*),
\end{align*}
where equality $(i)$ follows because $X_t \in \Dcal_{t-1}$ by virtue of the acquisition funtion being predictable. To conclude the proof, we just need to argue that the braced term in the last expression equals one as claimed. This term can be recognized as integrating a likelihood ratio, which equals one because for any two absolutely continuous distributions $P,Q$, we have $\EE_P(dQ/dP) = \int (dQ/dP) dP = \int dQ = 1$. For readers unfamiliar with this fact, we verify it below by direct integration. Once we condition on $\Dcal_{t-1}$, only $Y_t$ is random, and so the relevant term equals
\[
\int_y \frac{\phi(\frac{y - g(X_t)}{\eta})}{\phi(\frac{y - f^*(X_t)}{\eta})} \frac1{\eta}\phi\left(\frac{y - f^*(X_t)}{\eta}\right) dy = \int_y \frac1{\eta} \phi\left(\frac{y - g(X_t)}{\eta}\right)  dy = 1,
\]
where the last equality holds simply because a Gaussian PDF with any mean integrates to one.
\end{proof}

Also see \citet{waudby2020confidence} for another application of the prior-posterior ratio martingale.
The prior-posterior-ratio is related to the marginal likelihood and the Bayes factor, but the latter two terms are typically used in a Bayesian context, so we avoid their use since the guarantee above is fully frequentist: the expectation $\EE_{D_t \sim f^*}$ is not averaging over any prior: no prior is even assumed to necessarily exist in generating $f^*$, or if it exists it may be incorrectly specified. The most accurate analogy to past work in frequentist statistics is to interpret this statement as saying that the mixture likelihood ratio is a martingale --- a well known fact, implicit in  \citet{wald_sequential_1947}, and exploited in sequential testing \citep{robbins_boundary_1970} and estimation \citep{howard_uniform_2019}. Here, the prior $\GP_0$ plays the role of the mixing distribution. 
However, our language more directly speaks to how one might apply Bayesian methodology towards frequentist goals in other problems.

\subsection{Constructing the confidence sequence}

Despite the apparent generality of Lemma~\ref{lem:PPR-martingale}, it is not directly useful. Indeed, $R_t(f^*)$ is a martingale, but not $R_t(f)$ for any other $f$, and we obviously do not know $f^*$. This is where Ville's inequality \eqref{eq:Ville} enters the picture: we use Lemma~\ref{lem:PPR-martingale} to construct the following confidence sequence and use Ville's inequality to justify its correctness. Define 
\begin{equation}\label{eq:CS-PPR}
C_t := \left\{ f \in \Fcal : R_t(f) < \frac1\alpha \right\}.
\end{equation}
We claim that $f^*$ is an element of the confidence set $C_t$, through all of time, with high probability.

\begin{proposition}
Consider any (fixed, unknown) $f^* \in \Fcal$ that generates data according to \eqref{eq:model}, any acquisition function $\varphi_t$, and any nontrivial working prior $\GP_0$. Then, $C_t$ defined in \eqref{eq:CS-PPR} is a confidence sequence for $f^*$:
\[
\Pr(\exists t \in \NN: f^* \notin C_t) \leq \alpha.
\]
Thus, at any arbitrary data-dependent stopping time $\tau$, we have $\Pr(f^* \notin C_\tau) \leq \alpha$.
\end{proposition}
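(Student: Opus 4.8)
The plan is to reduce the claim to a single application of Ville's inequality \eqref{eq:Ville} to the martingale furnished by Lemma~\ref{lem:PPR-martingale}. First I would rewrite the failure event in terms of the prior-posterior-ratio process. By the definition of $C_t$ in \eqref{eq:CS-PPR}, and since $f^* \in \Fcal$ is fixed (so its membership in $\Fcal$ plays no role), the event $f^* \notin C_t$ is exactly the event $R_t(f^*) \geq 1/\alpha$. Hence
\[
\{\exists t \in \NN : f^* \notin C_t\} ~=~ \{\exists t \in \NN : R_t(f^*) \geq 1/\alpha\},
\]
and it suffices to bound the probability of the right-hand event.

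Next I would verify that $\{R_t(f^*)\}_{t \geq 0}$ satisfies the hypotheses of Ville's inequality, namely that it is a nonnegative supermartingale. Lemma~\ref{lem:PPR-martingale} already establishes that it is a martingale with respect to $\{\Dcal_t\}_{t \geq 0}$, and every martingale is a supermartingale. Nonnegativity follows from the representation \eqref{eq:PPR-mixtureLRT}: the quantity $R_t(f^*) = \EE_{g \sim \GP_0}[\Lcal_t(g)/\Lcal_t(f^*)]$ is an expectation of a ratio of working likelihoods, each a product of Gaussian densities and therefore strictly positive, so the integrand is nonnegative and hence so is $R_t(f^*)$.

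With these two facts in hand, I would invoke Ville's inequality with threshold $x = 1/\alpha$, giving
\[
\Pr(\exists t \in \NN : R_t(f^*) \geq 1/\alpha) ~\leq~ \frac{\EE[R_0(f^*)]}{1/\alpha} ~=~ \alpha\,\EE[R_0(f^*)].
\]
Because $R_0(f) = 1$ for every $f$ (the prior and posterior coincide before any data arrive), we have $R_0(f^*) = 1$ deterministically, so $\EE[R_0(f^*)] = 1$ and the bound collapses to $\alpha$. Combining this with the event identity above yields $\Pr(\exists t : f^* \notin C_t) \leq \alpha$, the desired confidence-sequence guarantee. The closing sentence about stopping times is then immediate from the equivalence recorded just after \eqref{eq:CS}: a time-uniform guarantee of the form \eqref{eq:CS} holds if and only if $\Pr(f^* \in C_\tau) \geq 1 - \alpha$ for every (possibly infinite, data-dependent) stopping time $\tau$.

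I do not expect a substantial obstacle here, since the heavy lifting is done in Lemma~\ref{lem:PPR-martingale}; what remains is a clean translation into Ville's inequality. The only two points requiring genuine care are the ones I would flag explicitly: checking nonnegativity, so that Ville's inequality (stated for nonnegative supermartingales) actually applies, and confirming the initial normalization $\EE[R_0(f^*)] = 1$, so that the right-hand side of Ville's bound is exactly $\alpha$ rather than some larger multiple of it.
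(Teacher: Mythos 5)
Your proposal is correct and follows essentially the same route as the paper's own proof: identify the event $\{f^* \notin C_t\}$ with $\{R_t(f^*) \geq 1/\alpha\}$, note that $R_t(f^*)$ is a nonnegative martingale with $R_0(f^*)=1$ by Lemma~\ref{lem:PPR-martingale}, and apply Ville's inequality with $x = 1/\alpha$. The only difference is that you spell out details the paper leaves implicit (nonnegativity via the representation \eqref{eq:PPR-mixtureLRT}, the martingale-implies-supermartingale step, and the stopping-time equivalence), which is fine but not a different argument.
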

\begin{proof}
First note that $f^* \notin C_t$ if and only if $R_t(f^*) \geq 1/\alpha$. Recall that $R_t(f^*)$ is a nonnegative martingale by Lemma~\ref{lem:PPR-martingale}, and note that $R_0(f^*)=1$. Then, Ville's inequality \eqref{eq:Ville} with $x=1/\alpha$ implies that $\Pr(\exists t \in \NN: R_t(f^*) \geq 1/\alpha) \leq \alpha$.
\end{proof}

$C_t$ is our prior-robust confidence sequence for $f^*$. For the purposes of the following discussion, let $|C_t|$ denote its size, for an appropriate notion of size such as an $\epsilon$-net covering.
Intuitively, if the working prior $\GP_0$ was accurate, which in the frequentist sense means that it put a large amount of mass at $f^*$ relative to other functions, then $|C_t|$ will be (relatively) small. If the working prior $\GP_0$ was inaccurate, which could happen because of a poor choice of kernel hyperparameters, or a poor choice of kernel itself, then $|C_t|$ will be (relatively) large. This degradation of quality ($|C_t|$ relative to accuracy of the prior) is smooth, in the sense that as long as small changes in the GP hyperparameters only change the mass at $f^*$ a little bit, then the corresponding confidence sequence (and hence its size) will also change only slightly. Formalizing these claims is possible by associating a metric over hyperparameters, and proving that if the map from hyperparameters to prior mass is Lipschitz, then the map $|C_t|$ is also Lipschitz, but this is beyond the scope of the current work. Such ``sensitivity analysis'' can be undertaken if the proposed new ideas are found to be of interest. 

$C_t$ is a confidence band for the entire function $f^*$, meaning that it is uniform over both $\Xcal$ and time, meaning that it provides a confidence interval for $f^*(x)$ that is valid simultaeously for all times and for all $x$ (on the grid, for simplicity). This uniform guarantee is important in practice because the BO algorithm is free to query at any point, and also free to stop at any data-dependent stopping time.

The aforementoned proposition should be compared to~\citet[Theorem 6, Appendix B]{srinivas2009gaussian}, which is effectively a confidence sequence for $f$ (though they did not use that terminology), and yielded the regret bound in their Theorem~3, which is very much in the spirit of our paper. However, the constants in their  Theorems~3, 6 are very loose, and it is our understanding that these are never implemented as such in practice; in contrast, our confidence sequence is essentially tight, with the error probability almost equaling $\alpha$, because Ville's inequality almost holds with equality for nonnegative martingales (it would be exact equality in continuous time).

Martingales have also been used in other fashions, for example to analyze convergence properties of BO methods; for example, \citet{bect2019supermartingale} use (super)martingales to study consistency of sequential uncertainty reduction strategies in the well-specified case.

\section{Practical considerations and numerical simulations}
\label{sec:simulations}

Being an infinite dimensional confidence set containing uncountably many functions, even at a fixed time, $C_t$ cannot be explicitly stored on a computer. In order to actually use $C_t$ in practice, two critical questions remains: (a) returning to the very start of Section~\ref{sec:mainresult}, how should we pick the set of functions $\Fcal$ under consideration? (b) at a fixed time $t$, and for a fixed new test point $x$ under consideration by the acquisition function for a future query, how can we efficiently construct the confidence interval for $f^*(x)$ that is induced by $C_t$? 
These two questions are closely tied together: certain choices of $\Fcal$ in (a) may make step (b) harder.
There cannot exist a single theoretically justified way of answering question (a): the type of functions that are ``reasonable'' will depend on the application.

We describe our approach to tackling these questions in the context of Figure~\ref{fig:viz}. Our answer ties together (a) and (b) using a form of implicit regularization; we suspect there is room for improvement.
Our code is available at:  \url{https://github.com/willieneis/gp-martingales}

\subsection{The introductory simulation}
In Figure~\ref{fig:viz}, we define two gaussian processes priors, $\text{GP}_0^{(1)}(\mu_1, \kappa_1)$
and $\text{GP}_0^{(2)}(\mu_2, \kappa_2)$. Both covariance matrices $\kappa_1$ and $\kappa_2$
are defined by a squared exponential kernel, i.e.
\begin{align}
\kappa(x, x’) = \sigma^2 \text{exp} \left(- \frac{(x-x’)^2}{2 \ell^2} \right),
\end{align}
with lengthscale $\ell$ and signal variance $\sigma^2$.
In this example, $\kappa_1$ has parameters $\{\ell = 1, \sigma^2 = 1.5\}$ and 
$\kappa_2$ has parameters $\{ \ell = 3, \sigma^2 = 1 \}$. Both
GPs have a fixed noise variance $\eta^2 = 0.1$ in model \eqref{eq:model}.
We show the posterior $95\%$ confidence region and posterior
samples for $\text{GP}_0^{(1)}(\mu_1, \kappa_1)$ and
for $\text{GP}_0^{(2)}(\mu_2, \kappa_2)$ in Figure~\ref{fig:viz};
the top two plots show typical functions drawn from these priors.

Now, we draw a single function from the first prior,
$f^* \sim \text{GP}_0^{(1)}(\mu_1, \kappa_1)$
shown as a blue line, which we really treat as a fixed function in this paper.
We then draw $t$ observations from this function via
\begin{align}
    X_i \sim \text{Uniform}\left[-10, 10\right], \hspace{5mm}
    Y_i \sim \mathcal{N}\left( f^*(X_i), \eta^2 \right), \hspace{5mm}
    i=1,\ldots,t. \nonumber
\end{align}
We compute the posterior $\GP_t$ (Eq.~\ref{eqn:gppost}), under 
the second prior $\text{GP}_0^{(2)}(\mu_2, \kappa_2)$, and plot the 
$95\%$ confidence region for $t \in (3, 5, 15, 17, 25, 40)$ 
in Figure~\ref{fig:viz}, rows 2-4 (shown as blue shaded regions).

We then aim to construct the prior-robust confidence sequence.
For each $t$, we can write the prior-posterior-ratio and confidence sequence
for $\alpha=0.05$ as 
\begin{align}
  R_t(f) = \frac{ \text{GP}_0^{(2)}(f) }{ \text{GP}_t(f) },
  \hspace{2mm} \text{and} \hspace{3mm}
  C_t = \left\{ f \in \Fcal : R_t(f) < 20 \right\}.
\end{align}
Next, we describe our procedure for implicitly specifying $\Fcal$ while computing $C_t$ in
Section~\ref{sec:numericaltools}, and plot it for each $x \in [-10, 10]$ in
Figure~\ref{fig:viz} (shown as yellow/brown shaded regions).

\subsection{Implicit specification of $\mathcal{F}$ while computing the confidence interval for $f^*(x)$ at time $t$}
\label{sec:numericaltools}

Suppose we are at iteration $t$ of BO, using a Bayesian model with prior
$\text{GP}_0(\mu_0, \kappa_0)$. Assume that we have observed data
${D}_{t-1} = \{(X_i, Y_i)\}_{i=1}^{t-1}$.
Assume we have a sequence $X_1', X_2^\prime, ... \in \mathcal{X}$ 
over which we'd like to evaluate our acquisition function $\varphi_t(x)$.
In BO, this sequence would typically be determined by an acquisition optimization routine,
which we can view as some zeroth order optimization algorithm.
For each point $X'$ in this sequence we do the following.

\paragraph{(1) Compute the GP posterior.}
Let $G_t = \{X \in {D}_{t-1}\} \cup X^\prime$. We will restrict the prior and posterior 
GP to this set of grid points, making them finite but high-dimensional Gaussians.
The infinite-dimensional confidence sequence (or a confidence set at one time instant) for $f^*$ induces a finite-dimensional confidence sequence (set) for its function values at these gridpoints.
In other words, for computation tractability, instead of computing the confidence set for the whole function,
we can think of each function as $f \in \mathbb{R}^{|G_t|}$, and compute posterior $\text{GP}_t(\mu_t, \kappa_t)$ according to
Eq.~\ref{eqn:gppost}. To avoid unnecessary notation, we will still call the gridded function as $f$ and its induced confidence set as $C_t$ (though in this section they will be $G_t$-dimensional).

\paragraph{(2) Regularize the posterior-prior ratio.}
We first define $\widetilde{\text{GP}}_0(\widetilde{\mu}_0, \widetilde{\kappa}_0)$
to be a GP that is very similar to the prior, except slidely wider.
More formally, let $\widetilde{\text{GP}}_0 \geq \text{GP}_0$ according to
Loewner order, so that $\widetilde{K}_0 - K_0$ is positive semi-definite (where
$\widetilde{K}_0$ and $K_0$ are the covariances matrices associated with
$\widetilde{\kappa}_0$ and $\kappa_0$). In our experiment, we let $\widetilde{\kappa}_0$
have the same parameters as $\kappa_0$, except with a slight larger signal
variance (e.g. $(1 + \gamma)\sigma^2$, where $\gamma = 10^{-2}$).

One can prove that there exists a Gaussian distribution with density
proportional to $\text{GP}_t(f) / \widetilde{\text{GP}}_0(f)$.
Define $\widetilde{R}_t^{-1}(f) := \text{GP}_t(f) / \widetilde{\text{GP}}_0(f)
= c \mathcal{N}(f | \mu_c, \Sigma_c)$, where $c > 0$. Then
\begin{align*}
  \Sigma_c = \left( K_t^{-1} - \widetilde{K}_0^{-1} \right)^{-1},
  \hspace{2mm} 
  \mu_c =  \Sigma_c \left( K_t^{-1} \mu_t -
                    \widetilde{K}_0^{-1} \widetilde{\mu}_0 \right),
  \hspace{2mm}\text{and}\hspace{2mm}
    c = \frac{|\widetilde{K}_0|  \mathcal{N}(\mu_t | \mu_0, \widetilde{K}_0 - K_t)}
            {|\widetilde{K}_0 - K_t|}
\end{align*}
where $K_t$ and $\widetilde{K}_0$ are the covariance matrices associated with
$\kappa_t$ and $\widetilde{\kappa}_0$.
Intuitively, $\mathcal{N}(f | \mu_c, \Sigma_c)$ can be viewed as the GP posterior 
where the prior has been ``swapped out'' \cite{neiswanger2017post}, and replaced with
$\text{GP}_0(f) / \widetilde{\text{GP}}_0(f)$.
Importantly, note that
$\lim_{\gamma \to 0} \widetilde{R}_t(f) = R_t(f)$,
 the prior-posterior-ratio (Eq.~\ref{eq:PPR}), with no restriction on $f$ or $\Fcal$. 
 
\paragraph{\emph{Remark:} the role of ``belief parameter'' $\gamma$.}
The parameter $\gamma$ plays important computational and statistical roles. Computationally speaking, numerical stability issues related to invertability are reduced by increasing $\gamma$.
Statistically, $\gamma$ implicitly defines the function class $\Fcal \equiv \Fcal_\gamma$ under consideration. $\gamma \to 0$ recovers an unrestricted $\Fcal_0$ that allows arbitrarily wiggly functions, and hence necessarily leads to large and pessimistic $C_t$. At the other extreme, $\gamma \to \infty$ recovers the usual posterior band used in BO, corresponding to the function class $\Fcal_\infty$ created with a full belief in $\GP_0$ (where complexity of a function can be thought of in terms of the mass assigned by the prior $\GP_0$).  To summarize, the ``belief parameter'' $\gamma$ plays three roles: 
\begin{itemize}\vspace{-0.05in}
\item[(A)] computational, providing numerical stability as $\gamma$ increases); 
\item[(B)] statistical, adding regularization that restricts the complexity of functions in $C_t$, and hence size of $C_t$, by implicitly defining $\Fcal_\gamma$); and 
\item[(C)] philosophical, trading a (Bayesian) subjective belief in the prior ($\gamma\to\infty)$ with (frequentist) robustness against  misspecification ($\gamma \to 0$).\vspace{-0.05in}
\end{itemize}
Returning to our simulation, the confidence sequence guarantees derived at $\gamma = 0$ provide robustness against \emph{arbitrary} misspecification of the prior, but our choice of $\gamma=10^{-2}$ seemed more reasonable if we think the prior is not completely ridiculous. An interesting direction for future work is to figure out how to automatically tune $\gamma$ in light of the aforementioned tradeoffs.

\paragraph{(3) Compute the confidence sequence.}
We can then use the confidence sequence
\begin{align*}
    C_t = \{ f \in \mathbb{R}^{|G_t|} : \widetilde{R}_t^{-1}(f) > \alpha \}.
\end{align*}



Thus we know that $C_t$ is an ellipsoid defined by the superlevel set of
$\widetilde{R}_t^{-1}(f)$. To compute $C_t$, we can traverse outwards from the 
posterior-prior ratio mean $\mu_c$ until we have found the Mahalanobis distance $k$ to the
isocontour $\mathcal{I} = \{f \in \mathbb{R}^{|G_t|} : c \mathcal{N}(f | \mu_c, \Sigma_c) = \alpha \}$.
    
We can therefore view $C_t$ as the $k$-sigma ellipsoid of the posterior 
GP (normal distribution) given by  $\mathcal{N}(f | \mu_c, \sigma_c))$. Using this
confidence ellipsoid over $f$, we can compute a lower confidence bound for the value of
$f(X^\prime)$, which we use as a LCB-style acquisition function
$\varphi_t(x)$ at input $X^\prime$.

To summarize the detailed explanations, our simulations use:
\begin{align*}
    \widetilde{R}_t(f) = \frac{\widetilde{\text{GP}}_0(f)}{\text{GP}_t(f)}
    = \frac{\text{GP}_0(f)}{\text{GP}_t(f)} \frac{\widetilde{\text{GP}}_0(f)}{\text{GP}_0(f)}
    = R_t(f) \frac{\widetilde{\text{GP}}_0(f)}{\text{GP}_0(f)},
\end{align*}
where $\widetilde{\text{GP}}_0(f)$ is the same as $\text{GP}_0(f)$, except with the signal
variance parameter $\sigma^2$  set to $\sigma^2 (1 + \gamma)$. 

\paragraph{BO simulations: GP-LCB versus CS-LCB.}
We demonstrate BO using $C_t$ (following the procedure outlined above, which we call CS-LCB)
and compare it against the GP-LCB algorithm. Results for these experiments are given in Appendix~\ref{sec:appendix}.
Briefly, we applied these methods to optimize an unknown function $f^*$ in both the well-specified
and misspecified settings. The findings were as expected: under a misspecified prior,
GP-LCB is overconfident about its progress and fails to minimize $f^*$, while CS-LCB
mitigates the issue. For a well-specified prior, both algorithms find the minimizer,
but GP-LCB finds it sooner than CS-LCB. 

\paragraph{Robustness to misspecified likelihood.} Throughout this paper, we have assumed correctness of the likelihood model \eqref{eq:model}, but what if that assumption is suspect? In the supplement, we repeat the experiment in Figure~\ref{fig:viz}, except when the true noise $\eta^*$ is half the value $\eta$ used by the working likelihood (Figure~\ref{fig:viz-lownoise}), as well as when $\eta^*$ is double of $\eta$ (Figure~\ref{fig:viz-highnoise}).
    As expected, when the noise is smaller than anticipated, our CS remains robust to the prior misspecification, but when the noise is larger, we begin to notice failures in our CS. We propose a simple fix: define $\check R_t := R_t^\beta$, for some $\beta \in (0,1)$, and construct the CS based on $\check R_t$. Figure~\ref{fig:viz-powered} uses $\beta=0.75$ and reports promising results. This procedure is inspired by a long line of work in Bayesian inference that proposes raising likelihoods to a power less than one in order to increase robustness \cite{ibrahim2000power,royall2003interpreting,grunwald2012safe,grunwald2017inconsistency,miller2019robust,wasserman2019universal}. Since we desire frequentist coverage guarantees for a Bayesian working model (not assuming correctness of a Bayesian prior), we simply point out that $\check R_t$ is not a martingale like $R_t$, and is instead a supermartingale due to Jensen's inequality. Since Ville's inequality applies, the resulting CS is still valid. Thus it appears at first glance, that one can obtain some amount of robustness against both misspecified priors and likelihoods. 
    However, as mentioned below, merging this idea with hyperparameter tuning and a data-dependent choice of $\beta$ seems critical for practice.

\section{Discussion}
\label{sec:discussion}

Confidence sequences were introduced and studied in depth by Robbins along with Darling, Siegmund and Lai \cite{darling_confidence_1967,robbins_boundary_1970,lai_boundary_1976,lai_confidence_1976}. The topic was subsequently somewhat dormant but came back into vogue due to applications to best-arm identification in multi-armed bandits \cite{jamieson_lil_2014}. Techniques related to nonnegative supermartingales, the mixture method, Ville's inequality, and nonparametric confidence sequences have been studied very recently --- see \citet{howard_exponential_2018,howard_uniform_2019,kaufmann2018mixture,howard2019sequential,waudby2020bounded,waudby2020confidence} and references therein. 
They are closely tied to optional stopping, continuous monitoring of experiments and scientific reproducibility \cite{wald_sequential_1947,balsubramani_sharp_2014,balsubramani_sequential_2016,johari_peeking_2017,shafer_test_2011,grunwald_safe_2019,howard_uniform_2019}. We are unaware of other work that utilizes them to quantify uncertainty in a BO context.

Many important open questions remain. We describe three directions:\vspace{-0.05in}
\begin{itemize}[leftmargin=7mm]
    \item \textbf{Hyperparameter tuning.} It is common in BO practice to tune hyperparameters on the fly \cite{snoek2012practical, shahriari2015taking, kandasamy2019tuning, neiswanger2019probo}. These can alleviate some problems mentioned in the first page of this paper, but probably only if the kernel is a good match and the function has homogeneous smoothness. We would like to explore if hyperparameter tuning can be integrated into confidence sequences. 
    
The manner in which we estimate hyperparameters is critical, as highlighted by the recent work of \citet{bachoc2018asymptotic} who asks: what happens when we estimate hyperparameters of our kernel using (A) maximum likelihood estimation, or (B) cross-validation, when restricting our attention to some prespecified set of hyperparameters which do not actually capture the true covariance function? The answer turns out to be subtle: the Maximum Likelihood estimator asymptotically minimizes
a Kullback-Leibler divergence to the misspecified parametric set, while Cross Validation asymptotically
minimizes the integrated square prediction error; Bachoc demonstrates that the two approaches could be rather different in practice.

    \item \textbf{The belief parameter $\gamma$.} Can $\gamma$ be tuned automatically, or updated in a data-dependent way? Further, if we move to the aforementioned hyperparameter tuning setup, can we design a belief parameter $\gamma$ that can smoothly trade off our belief in the tuned prior against robustness to misspecification? Perhaps we would want $\gamma \to \infty$ with sample size so that as we get more data to tune our priors better, we would need less robustness protection. Further, perhaps we may wish to use a convex combination of kernels, with a weight of $1/(1+\gamma)$ for a simpler kernel (like Gaussian) and a weight of $\gamma/(1+\gamma)$ for a more complex kernel, so that as $\gamma \to \infty$, we not only have more faith in our prior, but we may also allow more complex functions.
    \item \textbf{Computationally tractable choices for $\Fcal$.} While the method introduced in Section~\ref{sec:mainresult} is general, some care had to be taken when instantiating it in the experiments of Section~\ref{sec:simulations}, because the choice of function class $\Fcal$ had to be chosen to make computation of the set $C_t$ easy. Can we expand the set of computational tools so that these ideas are applicable for other choices of $\Fcal$? How do we scale these methods to work in high dimensions?
\end{itemize} 
The long-term utility of our new ideas will rely on finding suitable answers to the above questions. There are other recent works that study the mean-squared error of GPs under prior misspecification~\cite{beckers_mse_2018}, or under potentially adversarial noise in the observation model~\cite{bogunovic2020corruption}. Their goals are orthogonal to ours (uncertainty quantification), but a cross-pollination of ideas may be beneficial to both efforts.

\smallskip

We end with a cautionary quote from Freedman's Wald lecture~\cite{freedman1999wald}:
\begin{quote}
\textit{With a large sample from a smooth, finite-dimensional statistical model, the Bayes estimate and the maximum likelihood estimate will be close. Furthermore, the posterior distribution of the parameter vector
around the posterior mean must be close to the distribution of the maximum
likelihood estimate around truth: both are asymptotically normal with mean
0, and both have the same asymptotic covariance matrix. That is the con-
tent of the Bernstein–von Mises theorem. Thus, a Bayesian 95\%-confidence
set must have frequentist coverage of about 95\%, and conversely. In particular, Bayesians and frequentists are free to use each other’s confidence sets. However, even for the simplest infinite-dimensional models, the Bernstein–von Mises theorem does not hold (see Cox~\cite{cox1993analysis})...The sad lesson for inference is this. If frequentist coverage probabilities are wanted in an infinite-dimensional problem, then frequentist coverage probabilities must be computed. Bayesians, too, need to proceed with caution in the infinite-dimensional case, unless they are convinced of the fine details of their priors. Indeed, the consistency of their estimates and the coverage probability of their confidence sets depend on the details of their priors. 
}
\end{quote}

Our experiments match the expectations set by the above quote: while the practical appeal of Bayesian credible posterior GP intervals is apparent---they are easy to calculate and visualize---they appear to be inconsistent under even minor prior misspecification (Figure~\ref{fig:viz}), and this is certainly seems to be an infinite-dimensional issue. It is perhaps related to the fact that there is no analog of the Lebesgue measure in infinite dimensions, and thus our finite-dimensional intuition that ``any Gaussian prior puts nonzero mass everywhere'' does not seem to be an accurate intuition in infinite dimensions.

\subsection*{Acknowledgments}
AR thanks Akshay Balsubramani for related conversations.
AR acknowledges funding from an Adobe Faculty Research Award, and an NSF DMS 1916320 grant.
WN was supported by U.S. Department of Energy Office of Science under Contract No. DE-AC02-76SF00515.

\bibliography{main}

\newpage
\appendix

\section{Bayesian Optimization Simulations}
\label{sec:appendix}

We demonstrate BO using our confidence sequence $C_t$
(following the procedure outlined in Section~\ref{sec:numericaltools})
and compare it against the GP-LCB algorithm.
Results for these experiments are shown below, where we apply these methods to
optimize a function $f$ in both the misspecified
prior (Figure~\ref{fig:bo_1}) and correctly specified prior
(Figure~\ref{fig:bo_2}) settings.
We find that under a misspecified prior,
GP-LCB can yield inaccurate confidence bands and fail to find the
optimum of $f$, while BO using $C_t$ (CS-LCB) can help mitigate
this issue.

\begin{figure}[H]
\centering

\begin{subfigure}{0.45\textwidth}
  \includegraphics[width=\linewidth]{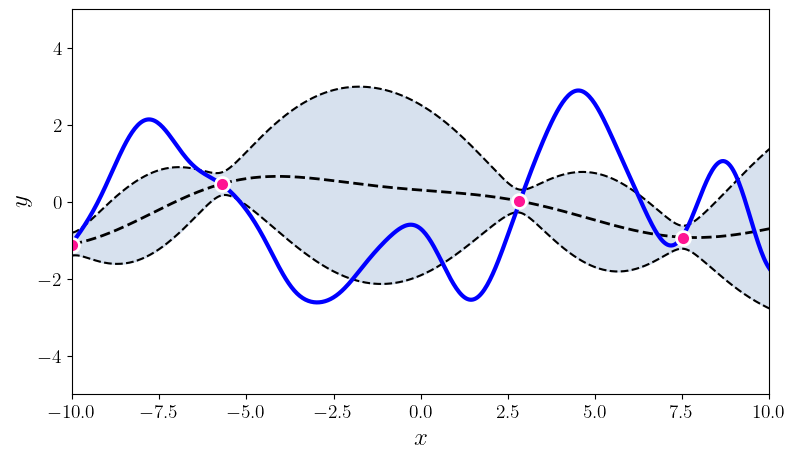}
\end{subfigure}\hfil
\begin{subfigure}{0.45\textwidth}
  \includegraphics[width=\linewidth]{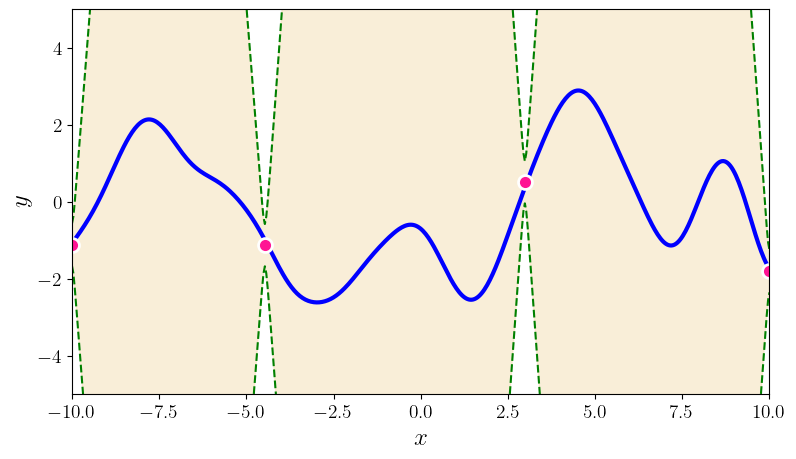}
\end{subfigure}
\\
\begin{subfigure}{0.45\textwidth}
  \includegraphics[width=\linewidth]{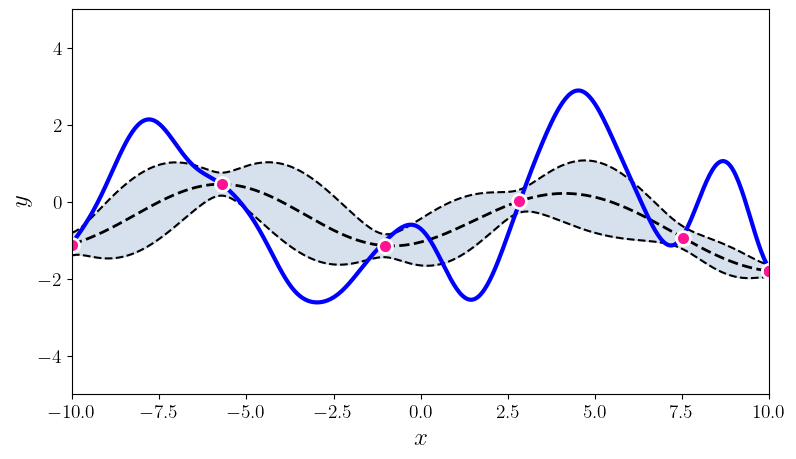}
\end{subfigure}\hfil
\begin{subfigure}{0.45\textwidth}
  \includegraphics[width=\linewidth]{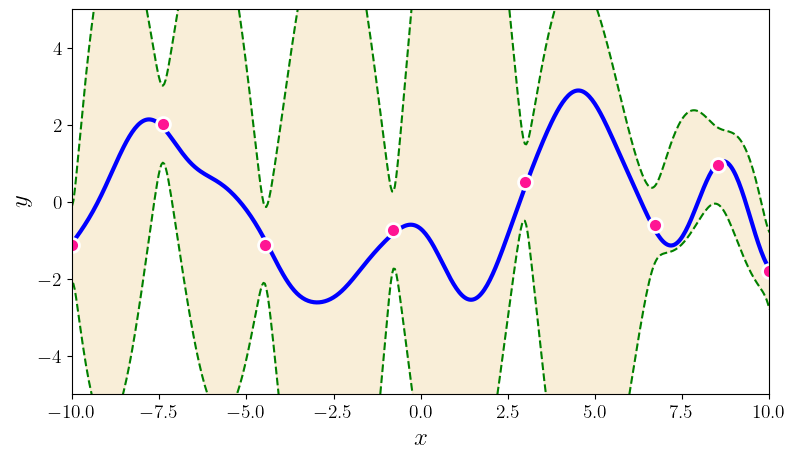}
\end{subfigure}
\\
\begin{subfigure}{0.45\textwidth}
  \includegraphics[width=\linewidth]{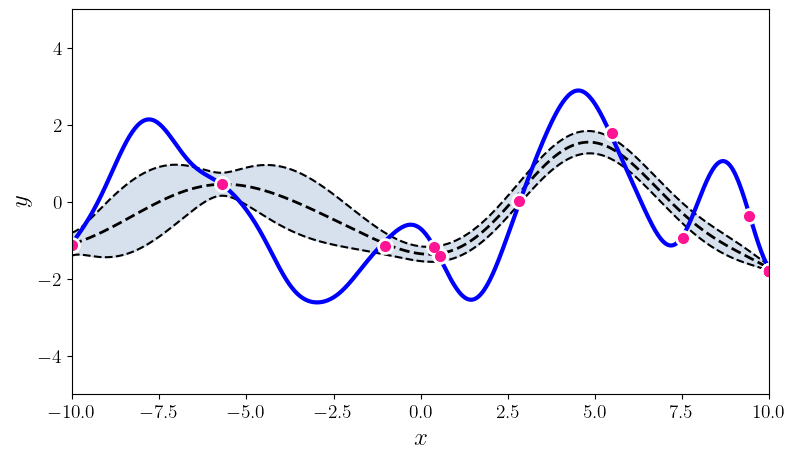}
\end{subfigure}\hfil
\begin{subfigure}{0.45\textwidth}
  \includegraphics[width=\linewidth]{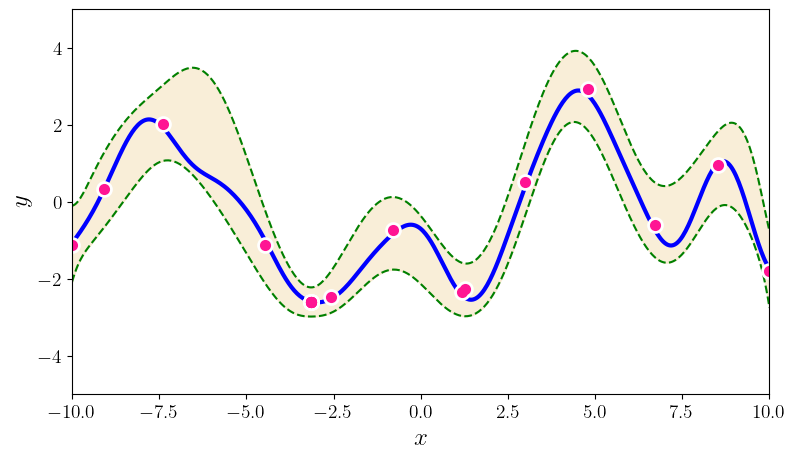}
\end{subfigure}
\\
\begin{subfigure}{0.45\textwidth}
  \includegraphics[width=\linewidth]{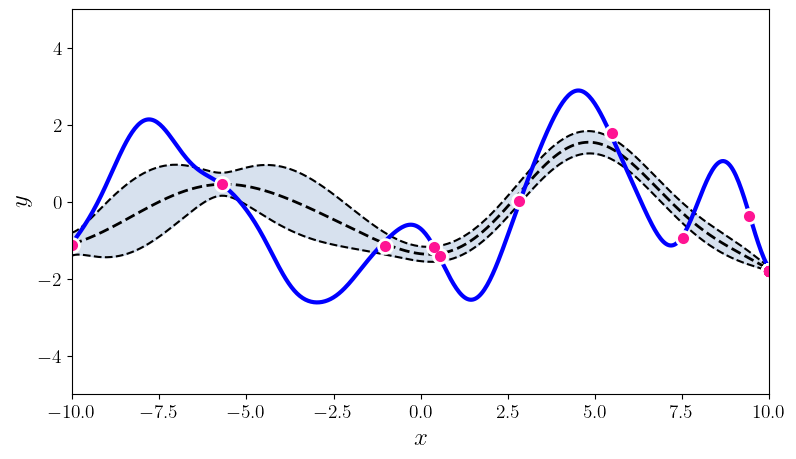}
\end{subfigure}\hfil
\begin{subfigure}{0.45\textwidth}
  \includegraphics[width=\linewidth]{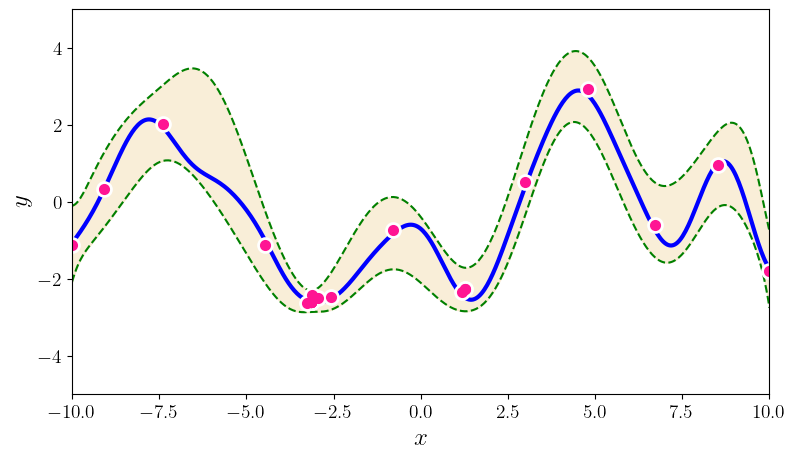}
\end{subfigure}

\caption{
This figure shows GP-LCB (left column) and CS-LCB (right column) for a misspecified prior, showing $t = 3, 7, 18, 25$
(rows 1-4). Here, GP-LCB yields inaccurate confidence bands, repeatedly queries at the wrong point (around $x=10.0$), and fails to find
the minimizer of $f$, while CS-LCB successfully finds the minimizer
(around $x = -3.0$).
}
\label{fig:bo_1}

\end{figure}

\begin{figure}[H]
\centering

\begin{subfigure}{0.45\textwidth}
  \includegraphics[width=\linewidth]{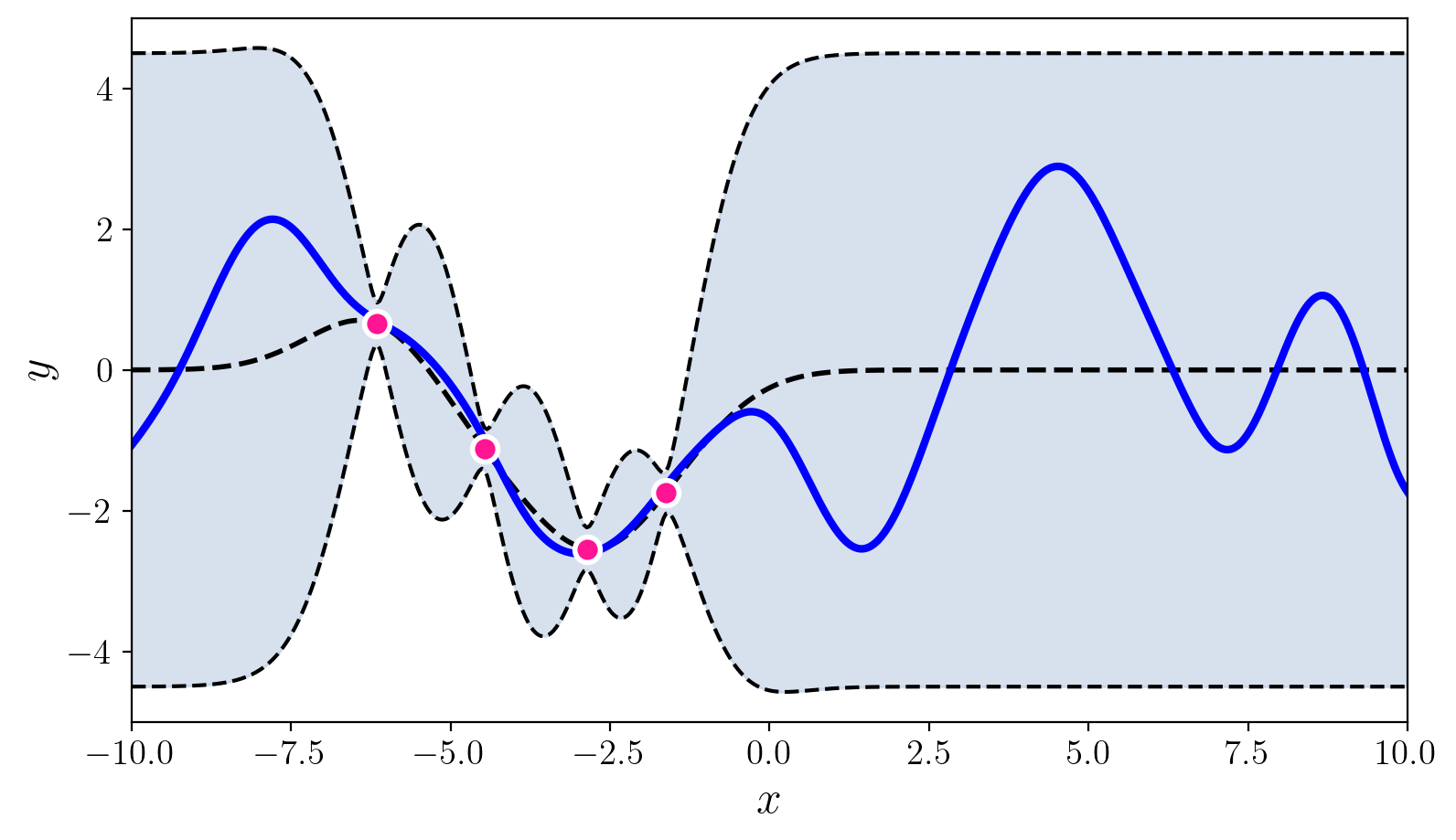}
\end{subfigure}\hfil
\begin{subfigure}{0.45\textwidth}
  \includegraphics[width=\linewidth]{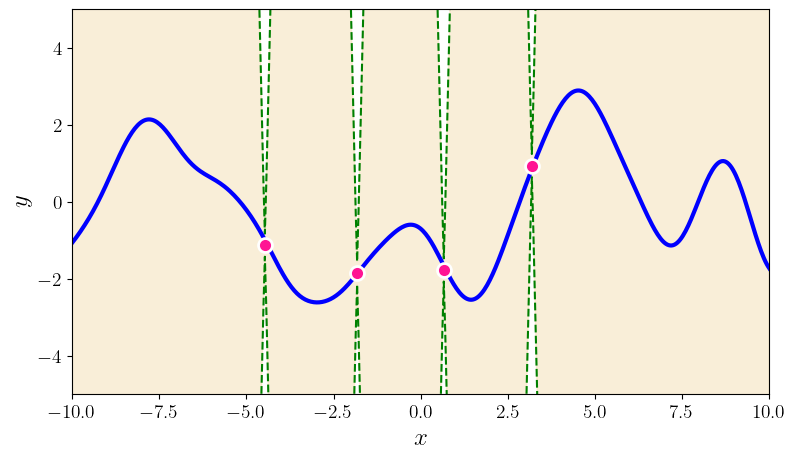}
\end{subfigure}
\\
\begin{subfigure}{0.45\textwidth}
  \includegraphics[width=\linewidth]{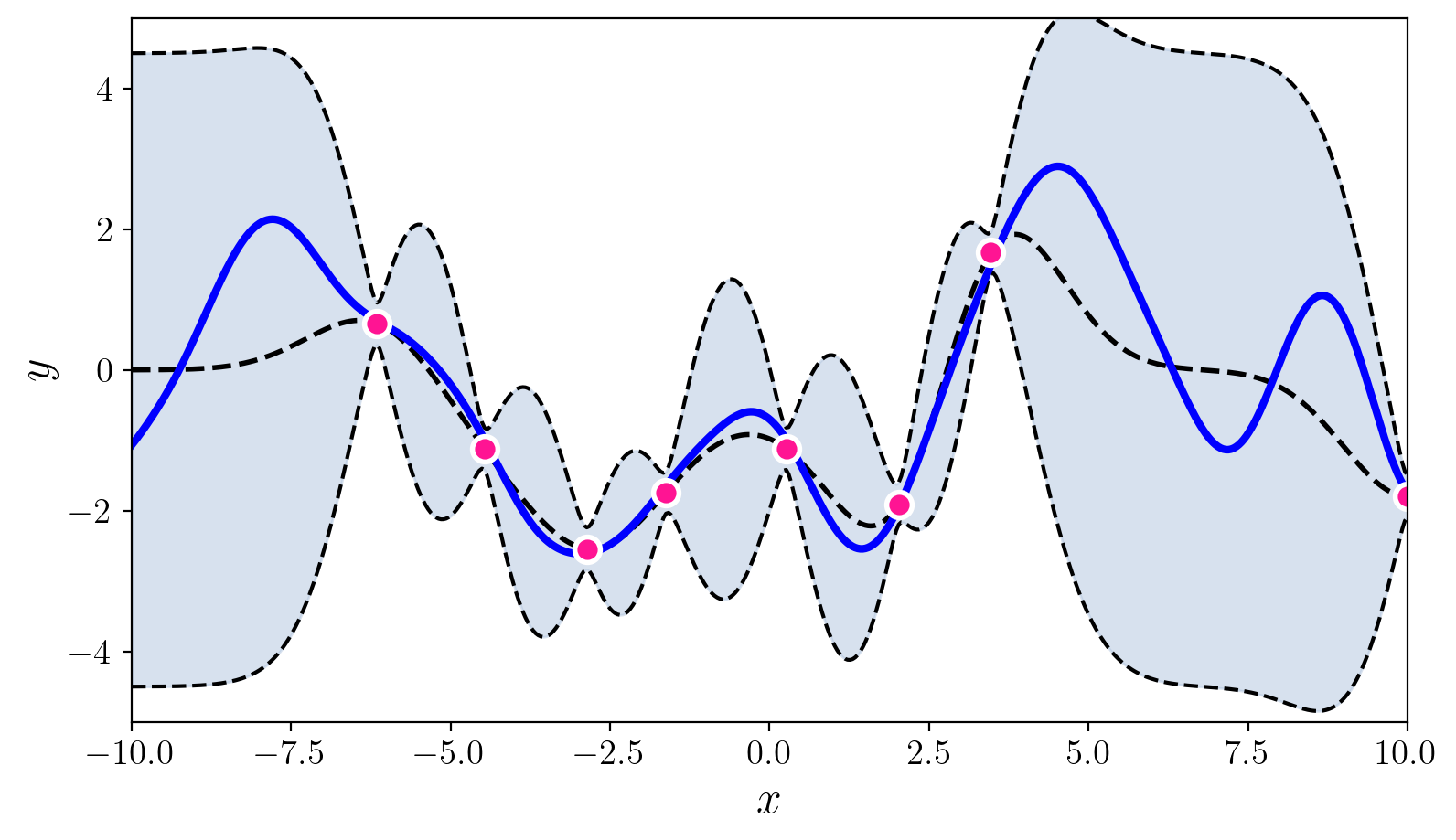}
\end{subfigure}\hfil
\begin{subfigure}{0.45\textwidth}
  \includegraphics[width=\linewidth]{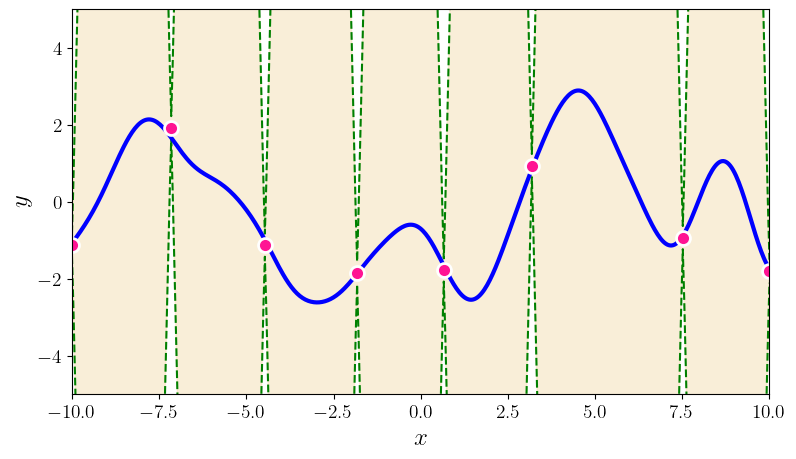}
\end{subfigure}
\\
\begin{subfigure}{0.45\textwidth}
  \includegraphics[width=\linewidth]{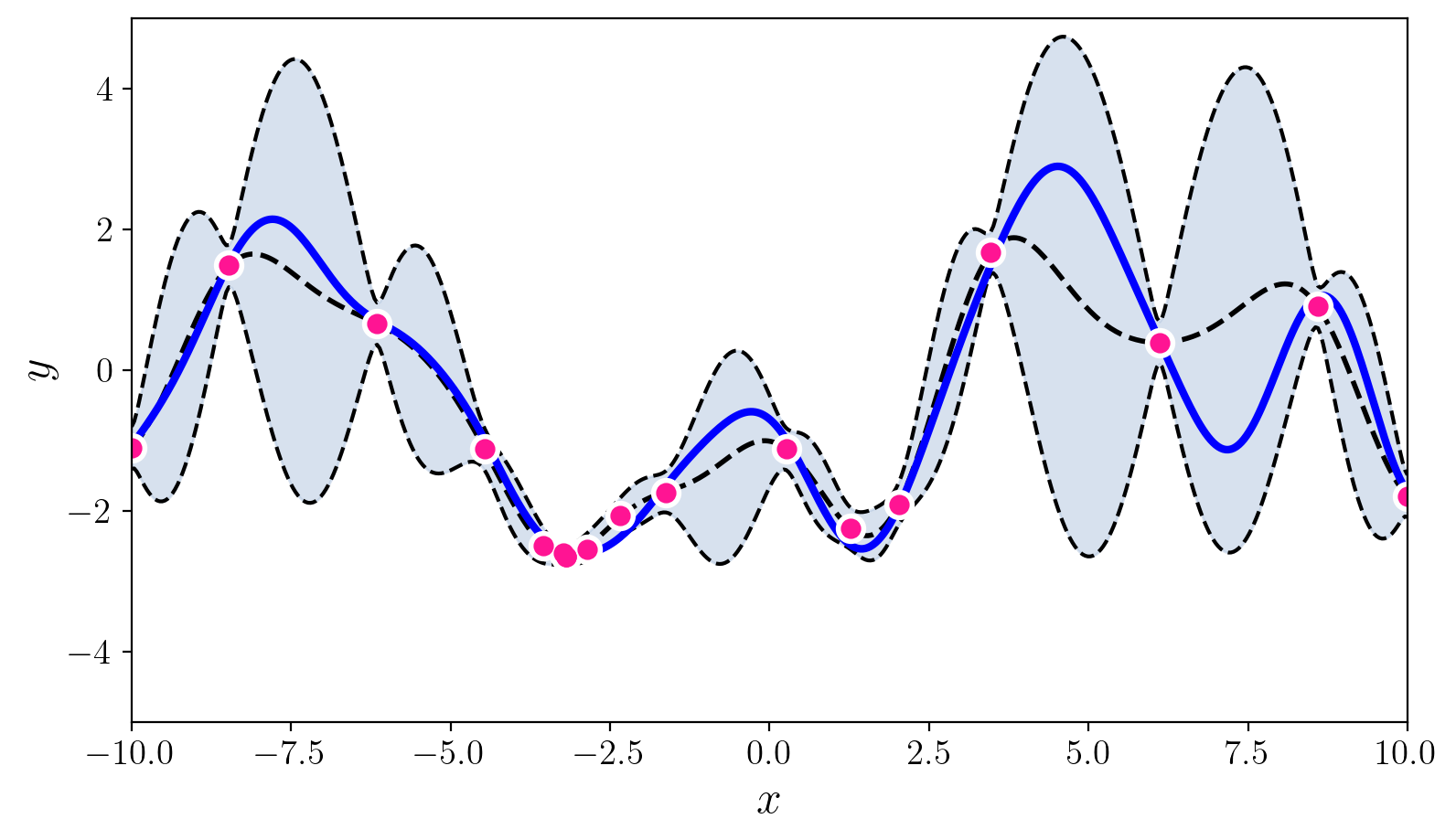}
\end{subfigure}\hfil
\begin{subfigure}{0.45\textwidth}
  \includegraphics[width=\linewidth]{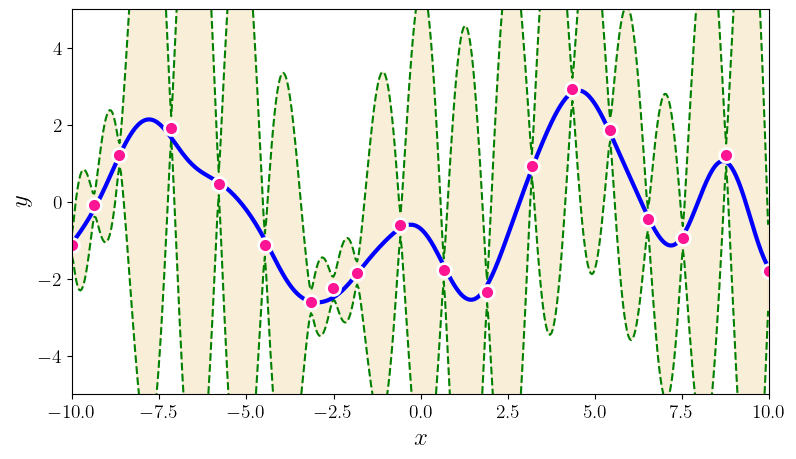}
\end{subfigure}
\\
\begin{subfigure}{0.45\textwidth}
  \includegraphics[width=\linewidth]{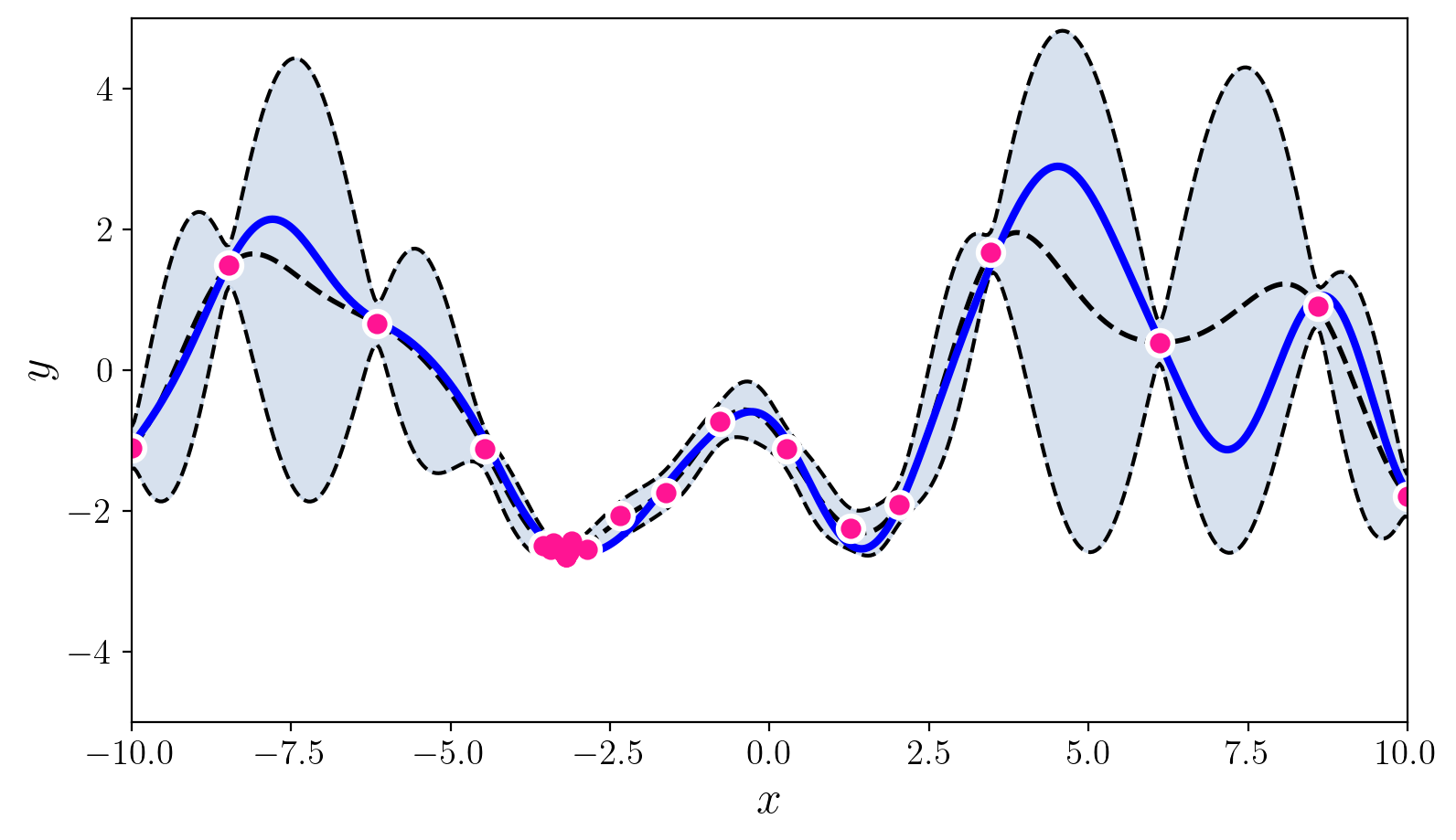}
\end{subfigure}\hfil
\begin{subfigure}{0.45\textwidth}
  \includegraphics[width=\linewidth]{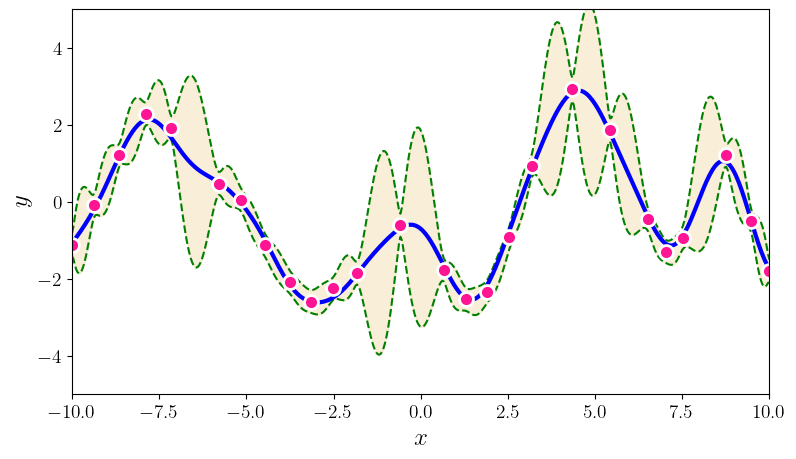}
\end{subfigure}

\caption{
This figure shows GP-LCB (left column) and CS-LCB (right column) for a correctly specified prior, showing
$t = 3, 7, 18, 25$ (rows 1-4). Here, both methods find the minimizer of $f$,
though GP-LCB has tighter confidence bands and finds the minimizer sooner than CS-LCB.
}
\label{fig:bo_2}

\end{figure}

\newpage

\paragraph{Two dimensional benchmark function}
We also perform a Bayesian optimization experiment on the two dimensional
benchmark \emph{Branin} function.\footnote{Details about this function can
be found here: \url{https://www.sfu.ca/~ssurjano/branin.html}}
In this experiment, we first run Bayesian optimization using the GP-LCB 
algorithm on a model with a misspecified prior, setting $\{\ell=7, \sigma^2=0.1\}$,
and compare it with our CS-LCB algorithm. In both cases, we run each algorithms
for 50 steps, and repeat each algorithm over 10 different seeds. We plot
results of both algorithms in Fig.~\ref{fig:branin}, along with the optimal 
objective value. We find that in this misspecified prior setting, CS-LCB converges
to the minimal objective value more quickly than GP-LCB.

\vspace{5mm}
\begin{figure}[h!]
    \centering
    \includegraphics[width=0.65\textwidth]{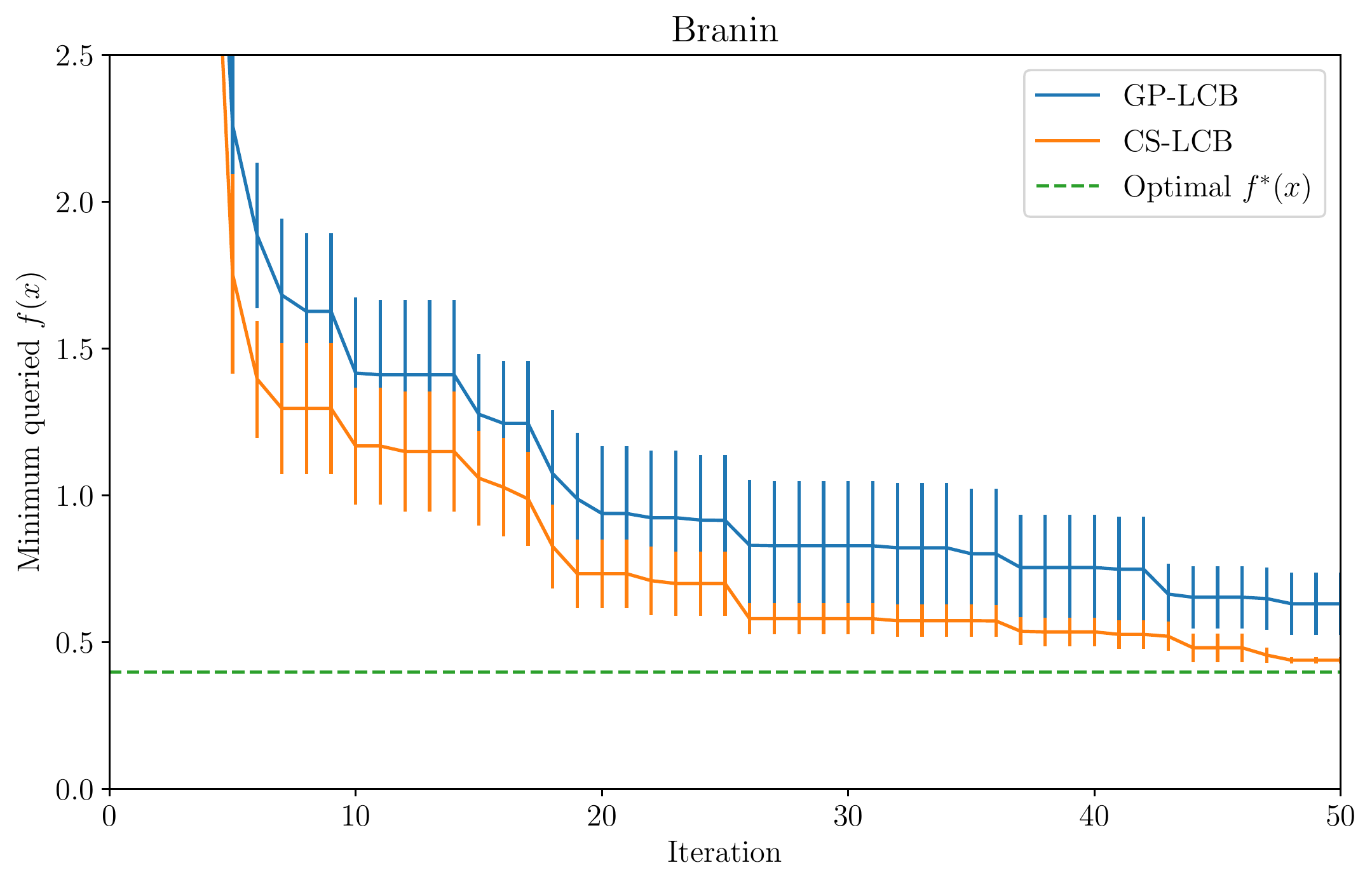}
    \caption{
    Bayesian optimization using CS-LCB and GP-LCB on the Branin function.
    }
    \label{fig:branin}
\end{figure}

\clearpage
\newpage

\section{Misspecified Likelihood: low/high noise, and powered likelihoods}
We next demonstrate BO in the setting where the likelihood is misspecified. In particular,
we are interested in the setting where the model assumes noise $\eta$, which is not
equal to the true noise $\eta^*$ from which the data is generated.
In this case, we demonstrate the fix proposed in 
Section~\ref{sec:simulations}, using powered likelihoods. We show results of this 
adjustment by repeating the experiment of Figure~\ref{fig:viz} for
$\eta > \eta^*$ (Figure~\ref{fig:viz-lownoise}) and $\eta < \eta^*$
(Figures~\ref{fig:viz-highnoise} and \ref{fig:viz-powered}).

\begin{figure}[h!]
\centering

\begin{subfigure}{0.45\textwidth}
  \includegraphics[width=\linewidth]{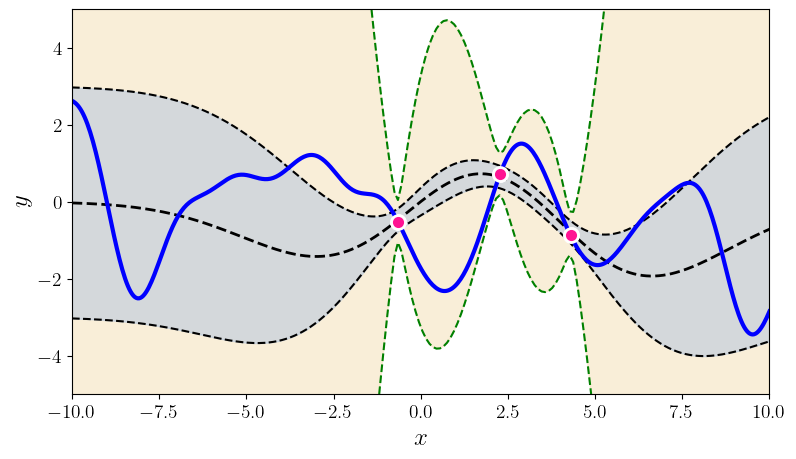}
\end{subfigure}\hfil
\begin{subfigure}{0.45\textwidth}
  \includegraphics[width=\linewidth]{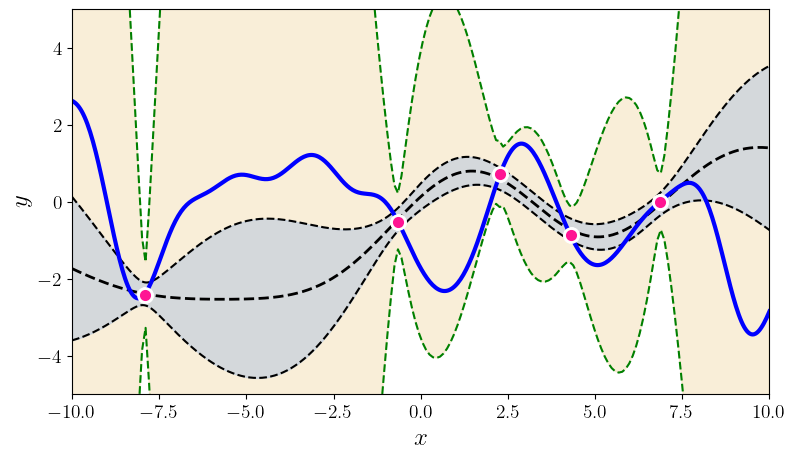}
\end{subfigure}
\\
\begin{subfigure}{0.45\textwidth}
  \includegraphics[width=\linewidth]{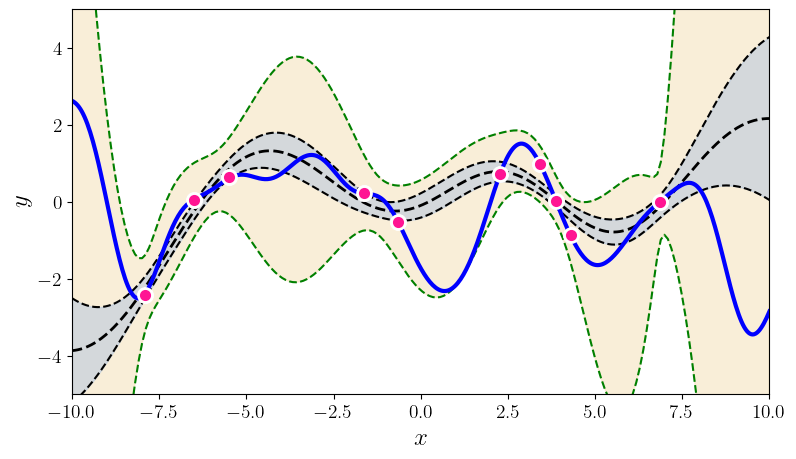}
\end{subfigure}\hfil
\begin{subfigure}{0.45\textwidth}
  \includegraphics[width=\linewidth]{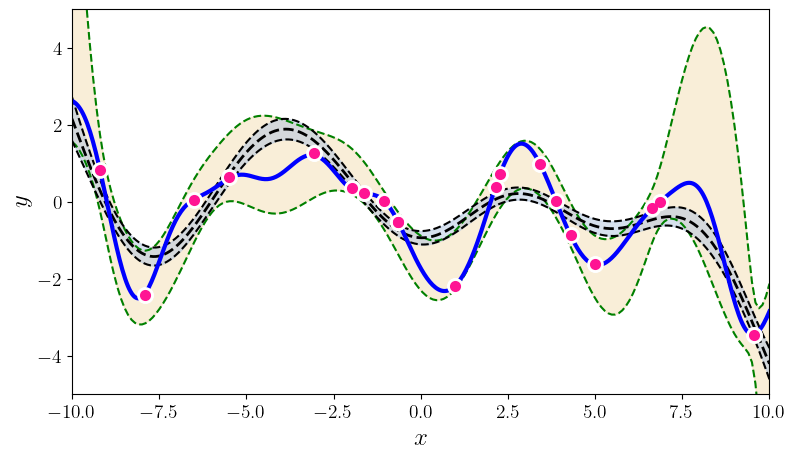}
\end{subfigure}
\\
\begin{subfigure}{0.45\textwidth}
  \includegraphics[width=\linewidth]{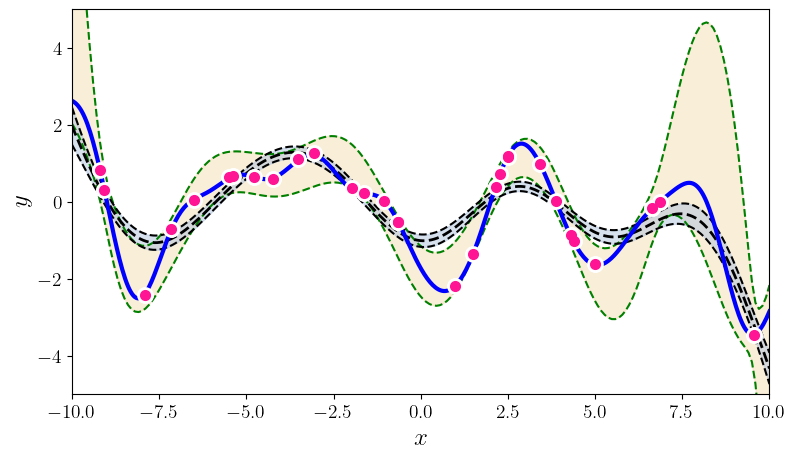}
\end{subfigure}\hfil
\begin{subfigure}{0.45\textwidth}
  \includegraphics[width=\linewidth]{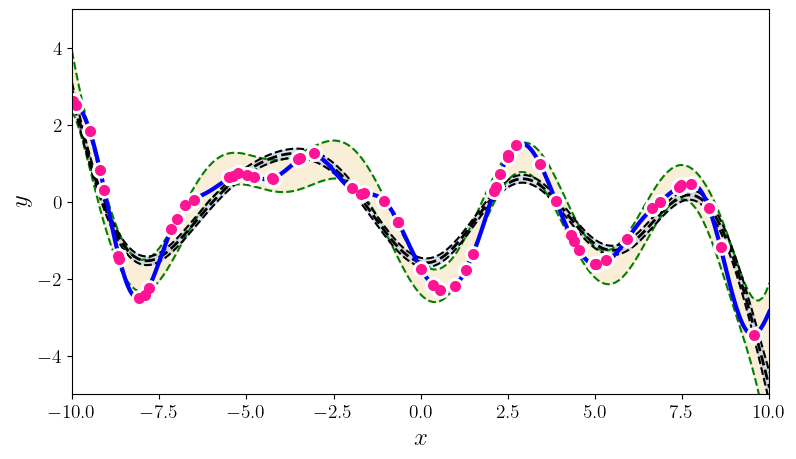}
\end{subfigure}

\caption{
[Low noise setting] We repeat the experiment of Figure~\ref{fig:viz}, but with the true noise $\eta^*$ of the data being one quarter of the assumed noise $\eta$ in the working model likelihood \eqref{eq:model}. Perhaps as expected, the observed behavior is almost indistinguishable from Figure~\ref{fig:viz} for both the standard GP posterior, which remains incorrectly overconfident, and our method, which covers the true function at all times.
}
\label{fig:viz-lownoise}

\end{figure}


\begin{figure}[h!]
\centering

\begin{subfigure}{0.45\textwidth}
  \includegraphics[width=\linewidth]{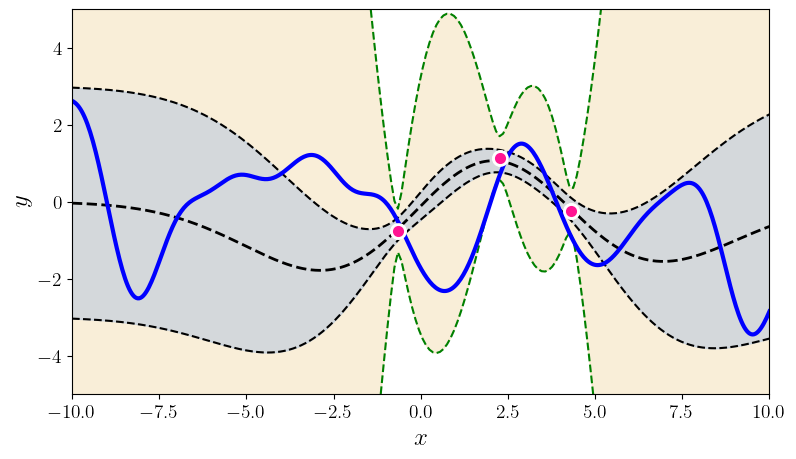}
\end{subfigure}\hfil
\begin{subfigure}{0.45\textwidth}
  \includegraphics[width=\linewidth]{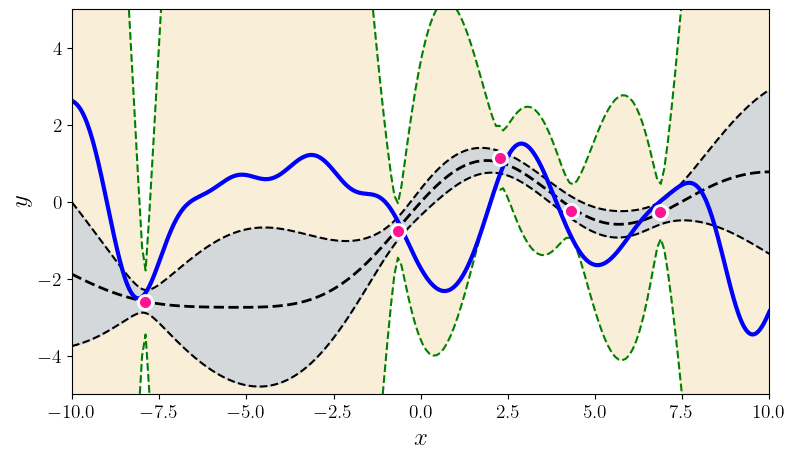}
\end{subfigure}
\\
\begin{subfigure}{0.45\textwidth}
  \includegraphics[width=\linewidth]{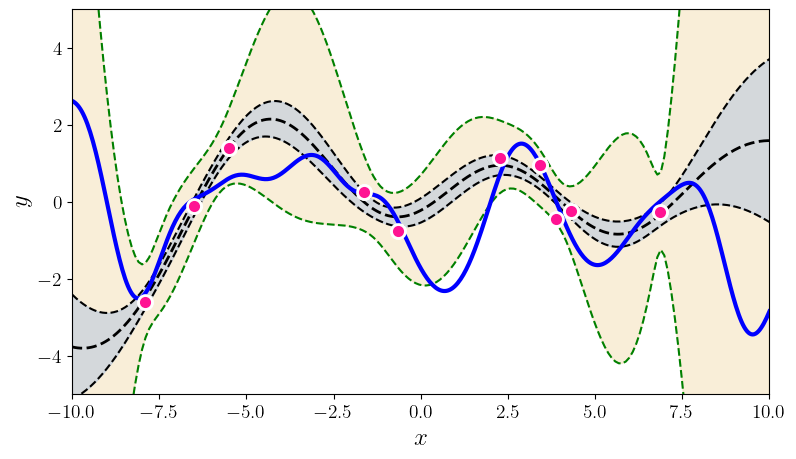}
\end{subfigure}\hfil
\begin{subfigure}{0.45\textwidth}
  \includegraphics[width=\linewidth]{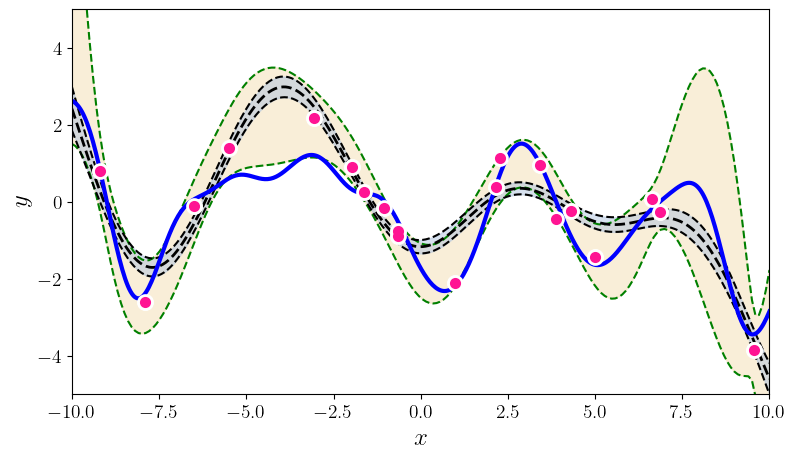}
\end{subfigure}
\\
\begin{subfigure}{0.45\textwidth}
  \includegraphics[width=\linewidth]{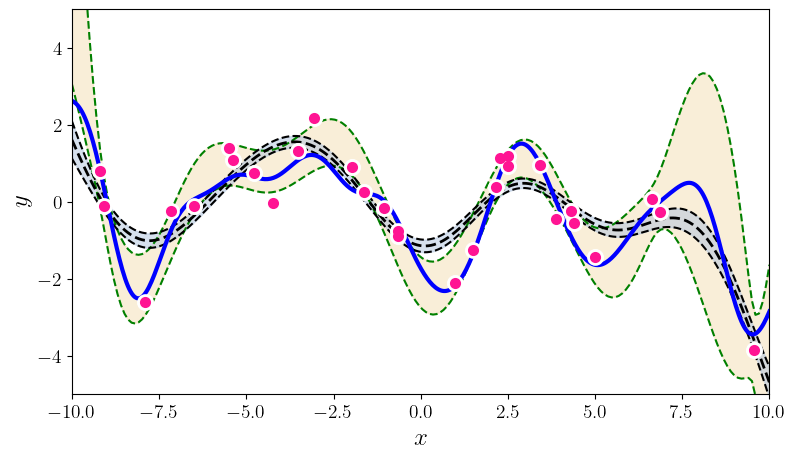}
\end{subfigure}\hfil
\begin{subfigure}{0.45\textwidth}
  \includegraphics[width=\linewidth]{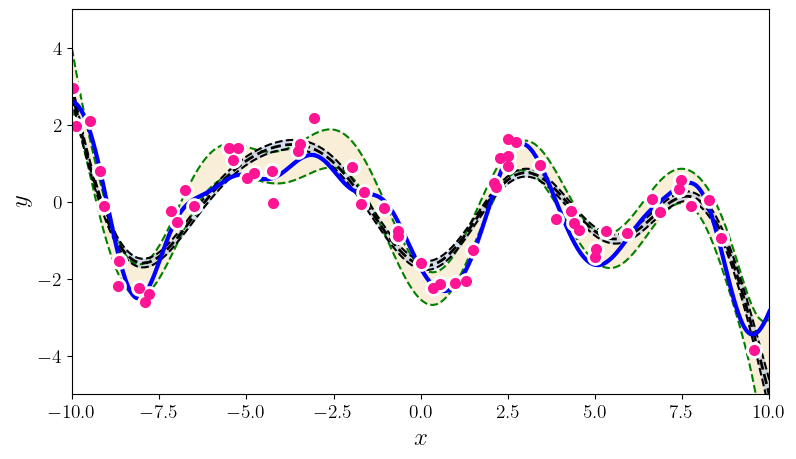}
\end{subfigure}


\caption{
[High noise setting] We repeat the experiment of Figure~\ref{fig:viz}, but with the true noise $\eta^*$ of the data being four times the assumed noise $\eta$ in the working model likelihood \eqref{eq:model}. 
In these plots, we can see incorrect confidence estimates for our prior-robust CS---for example, when the number of
observations $t=10$ (second row, first column), and when $t=20$ (second row, second column). As expected, our prior-robust CS is not robust to misspecification of the likelihood.
}
\label{fig:viz-highnoise}

\end{figure}

\begin{figure}[h!]
\centering

\begin{subfigure}{0.45\textwidth}
  \includegraphics[width=\linewidth]{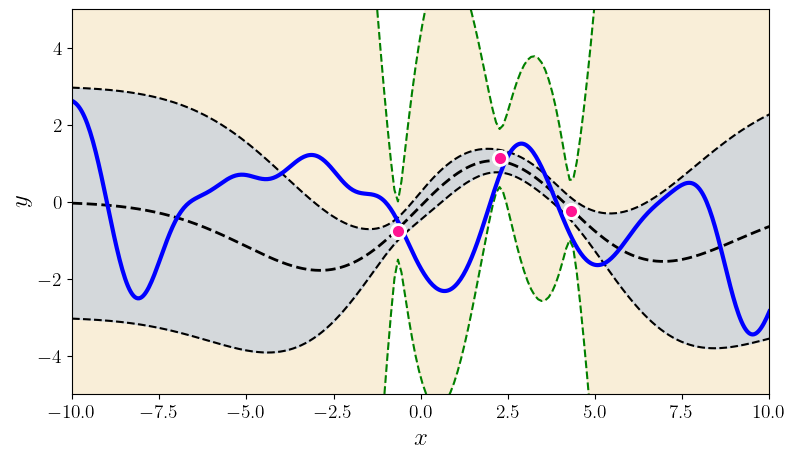}
\end{subfigure}\hfil
\begin{subfigure}{0.45\textwidth}
  \includegraphics[width=\linewidth]{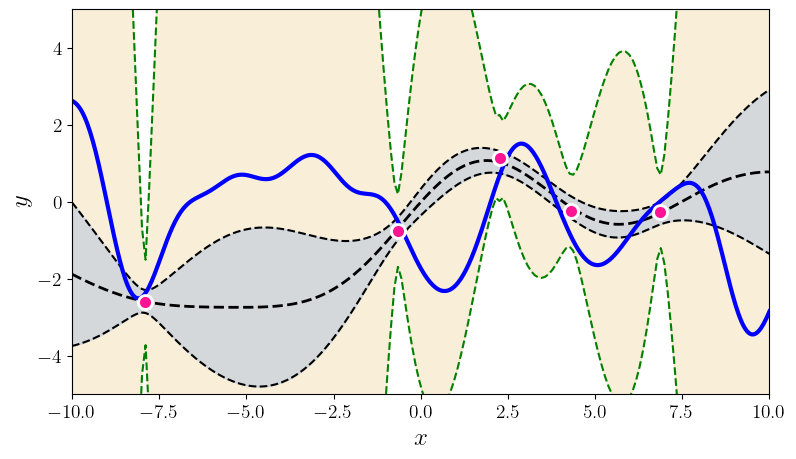}
\end{subfigure}
\\
\begin{subfigure}{0.45\textwidth}
  \includegraphics[width=\linewidth]{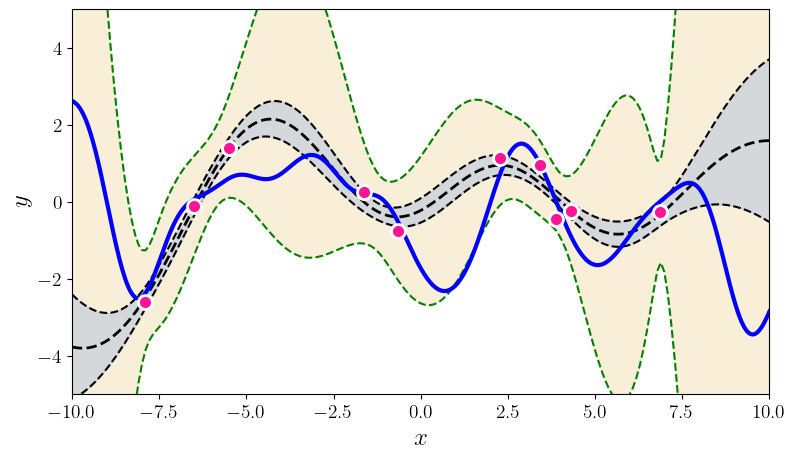}
\end{subfigure}\hfil
\begin{subfigure}{0.45\textwidth}
  \includegraphics[width=\linewidth]{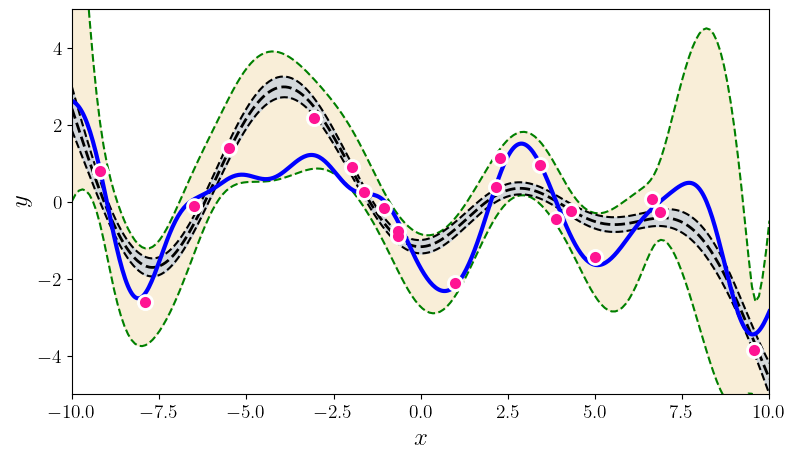}
\end{subfigure}
\\
\begin{subfigure}{0.45\textwidth}
  \includegraphics[width=\linewidth]{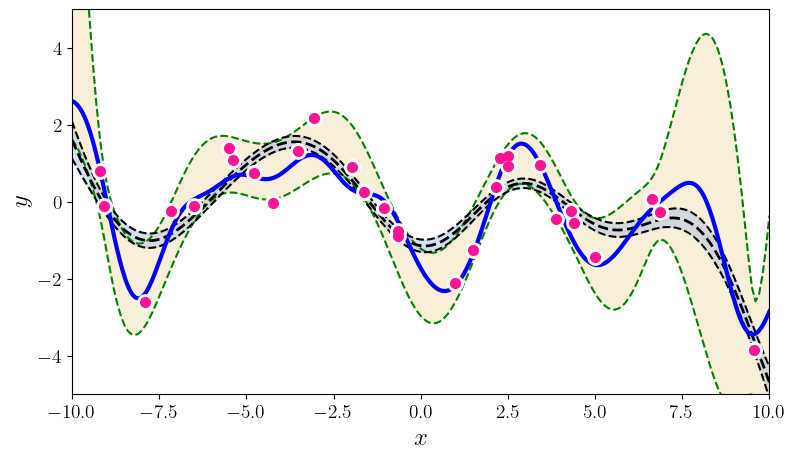}
\end{subfigure}\hfil
\begin{subfigure}{0.45\textwidth}
  \includegraphics[width=\linewidth]{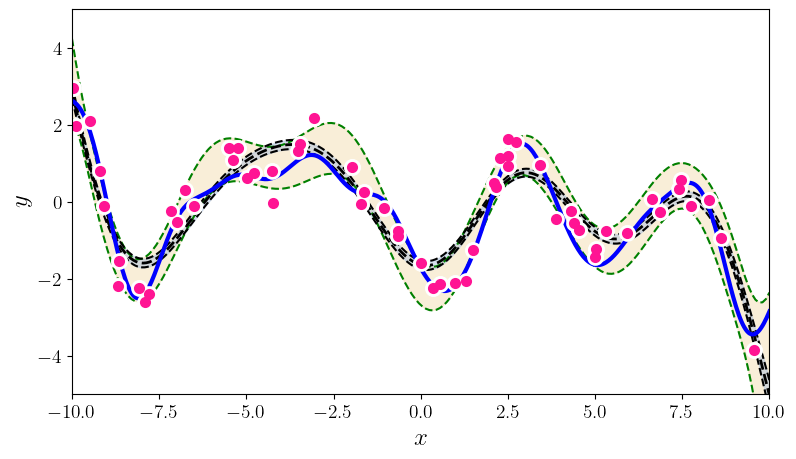}
\end{subfigure}

\caption{
[High-noise setting with our `powered likelihood' CS]
We consider the same setting of Figure~\ref{fig:viz-highnoise} when the noise of the data is multiplied by four while the assumed noise in the working model likelihood remains the same.
Here, we use a powered likelihood of $\beta = 0.75$ for a more robust confidence sequence, as described at the end of Section~\ref{sec:simulations}.
Note that the earlier issues at $t=10$ (second row, first column) and $t=20$ (second row, second column) 
are now resolved.
}
\label{fig:viz-powered}


\end{figure}

\end{document}